\documentclass{article}
\PassOptionsToPackage{dvipsnames}{xcolor}
\usepackage{microtype}
\usepackage{graphicx}
\usepackage{subfigure}
\usepackage{booktabs} %

\usepackage{subcaption}
\usepackage{hyperref}

\usepackage[accepted]{icml2025}

\usepackage{amsmath}
\usepackage{amssymb}
\usepackage{mathtools}
\usepackage{amsthm}

\usepackage[capitalize,noabbrev]{cleveref}

\theoremstyle{plain}

\usepackage{mdframed}
\definecolor{theoremcolor}{rgb}{0.94, 0.94, 0.94}
\definecolor{examplecolor}{rgb}{1, 1, 1.0}
\mdfsetup{
    backgroundcolor=black!5,
    linewidth=0pt,
}\newmdtheoremenv[linewidth=0pt,innerleftmargin=6pt,innerrightmargin=6pt]{definition}{Definition}
\newmdtheoremenv[linewidth=0pt,innerleftmargin=6pt,innerrightmargin=6pt]{proposition}{Proposition}
\newmdtheoremenv[linewidth=0pt,innerleftmargin=0pt,innerrightmargin=0pt,backgroundcolor=examplecolor]{example}{Example}
\newmdtheoremenv[linewidth=0pt,innerleftmargin=6pt,innerrightmargin=6pt]{corollary}{Corollary}
\newmdtheoremenv[linewidth=0pt,innerleftmargin=6pt,innerrightmargin=6pt]{theorem}{Theorem}
\newmdtheoremenv[linewidth=0pt,innerleftmargin=6pt,innerrightmargin=6pt]{lemma}{Lemma}
\newmdtheoremenv[linewidth=0pt,innerleftmargin=6pt,innerrightmargin=6pt]{remark}{Remark}

\usepackage[textsize=tiny]{todonotes}
\usepackage{comment}

\newcommand{\bigO}[1]{\mathcal{O}\left(#1\right)}

\usepackage{xspace}
\usepackage{bm}
\usepackage{enumitem}
\usepackage{nicefrac}
\usepackage{arydshln}

\makeatletter
\def\adl@drawiv#1#2#3{%
        \hskip.5\tabcolsep
        \xleaders#3{#2.5\@tempdimb #1{1}#2.5\@tempdimb}%
                #2\z@ plus1fil minus1fil\relax
        \hskip.5\tabcolsep}
\newcommand{\cdashlinelr}[1]{%
  \noalign{\vskip 2pt
           \global\let\@dashdrawstore\adl@draw
           \global\let\adl@draw\adl@drawiv}
  \cdashline{#1}[.4pt/2pt]
  \noalign{\global\let\adl@draw\@dashdrawstore
           \vskip 2pt}}
\makeatother

\NewDocumentCommand{\entmax}{o}{%
  \IfNoValueTF{#1}%
    {%
      \ensuremath{\alpha\text{-entmax\xspace}}%
    }{%
      \ensuremath{#1\text{-entmax\xspace}}%
    }%
    \xspace
}

\newcommand{\methodname}{\textsc{AdaSplash-2}\xspace}
\newcommand{\adasplash}{\textsc{AdaSplash}\xspace}

\icmltitlerunning{\methodname: Faster Differentiable Sparse Attention}

\begin{document}

\twocolumn[
\icmltitle{\methodname: Faster Differentiable Sparse Attention
}

\icmlsetsymbol{equal}{*}

\begin{icmlauthorlist}
\icmlauthor{Nuno M. T. Gonçalves}{equal,ist,it,cmu}
\icmlauthor{Hugo Pitorro}{equal,ist,it}
\icmlauthor{Vlad Niculae}{uva}
\icmlauthor{Edoardo M. Ponti}{uedin}
\icmlauthor{Lei Li}{cmu}
\icmlauthor{André F. T. Martins}{ist,it,transperfect,ellis}
\icmlauthor{Marcos V. Treviso}{ist,it,ellis}
\end{icmlauthorlist}

\icmlaffiliation{ist}{Instituto Superior Técnico, Universidade de Lisboa}
\icmlaffiliation{it}{Instituto de Telecomunicações}
\icmlaffiliation{cmu}{Carnegie Mellon University}
\icmlaffiliation{uva}{Language Technology Lab, University of Amsterdam}
\icmlaffiliation{uedin}{University of Edinburgh}
\icmlaffiliation{transperfect}{TransPerfect}
\icmlaffiliation{ellis}{ELLIS Unit Lisbon}

\icmlcorrespondingauthor{Nuno Gonçalves}{nuno.m.goncalves@tecnico.ulisboa.pt}

\icmlkeywords{Machine Learning, ICML}

\vskip 0.3in
]

\printAffiliationsAndNotice{\icmlEqualContribution} %

\begin{abstract}

Sparse attention has been proposed as a way to alleviate the quadratic cost of transformers, a central bottleneck in long-context training.
A promising line of work is $\alpha$-entmax attention, a differentiable sparse alternative to softmax that enables input-dependent sparsity yet has lagged behind softmax due to the computational overhead necessary to compute the normalizer $\tau$.
In this paper, we introduce \methodname, which addresses this limitation through a novel histogram-based initialization that reduces the number of iterations needed to compute $\tau$ to typically 1--2. The key idea is to compute a coarse histogram of attention scores on the fly and store it in on-chip SRAM, yielding a more accurate initialization that enables fast forward and backward computation.
Combined with a sparsity-aware GPU implementation that skips zero blocks with low overhead, \methodname matches or improves per-step training time relative to FlashAttention-2 when block sparsity is moderate-to-high (e.g., $>$60\%), which often occurs at long-context lengths.
On downstream tasks, models trained with our efficient $\alpha$-entmax attention match softmax baselines at short-context lengths and achieve substantial gains in long-context settings.

\end{abstract}

\section{Introduction}

The self-attention mechanism in transformers constitutes a major computational bottleneck since computing and materializing the score matrix $\bm{S} = \bm{Q}\bm{K}^\top$ incurs quadratic time and memory complexity, limiting scalability to long sequences~\citep{katharopoulos2020transformers,tay2022efficient}. 
FlashAttention~\citep{dao2022flashattention,dao2023flashattention2,shah2024flashattention}  provides efficient GPU implementations that avoid this quadratic memory cost in the case of softmax attention.
By making the computation IO-aware and by fusing operations over tiles that fit in on-chip fast memory, FlashAttention avoids storing intermediate scores and reduces memory complexity to linear in sequence length while achieving substantial speedups in training.\looseness=-1

The key operation requiring efficient implementations is \emph{normalization}.
For a score vector $\bm{s}$, softmax can be written as\looseness=-1
\begin{equation}
\label{eq:softmax_tau}
\mathrm{softmax}(\bm{s}) = \exp(\bm{s} - \tau \mathbf{1}), 
\quad 
\tau = \log \sum\nolimits_j \exp(s_j),
\end{equation}
where $\tau$ is an additive log-sum-exp normalizer.
This form admits stable online accumulation and makes softmax particularly amenable to single-pass, tiled GPU implementations.
FlashAttention and its successors~\citep{dao2023flashattention2,shah2024flashattention} exploit precisely this structure.

\begin{figure}[t]
    \centering
    \includegraphics[width=\columnwidth]{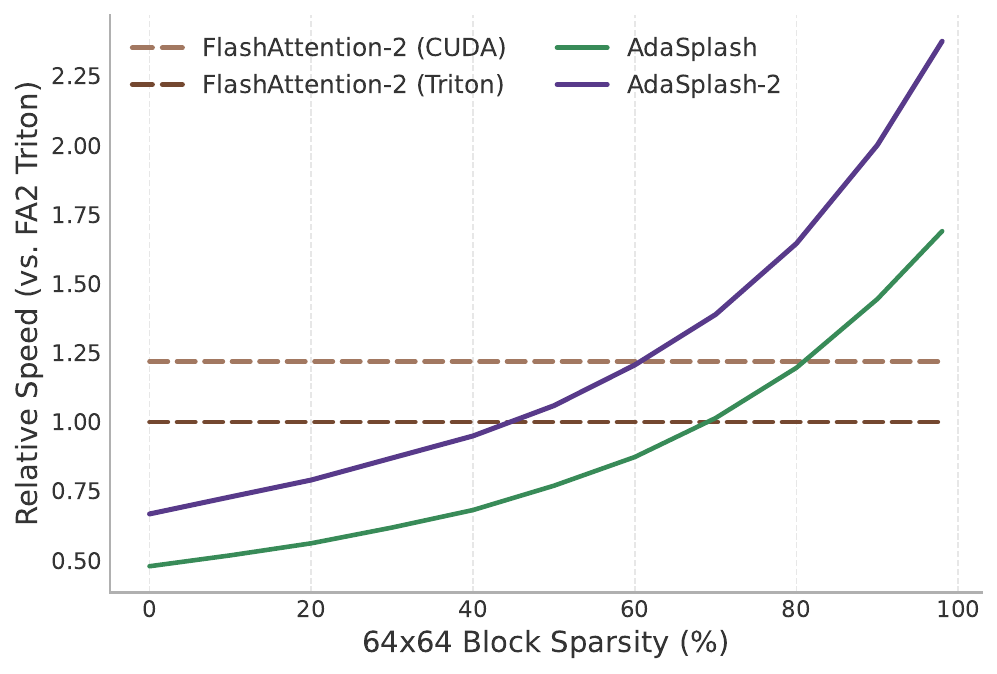}
    \vspace{-0.5cm}
    \caption{Runtime (forward + backward) as a function of input sparsity for causal attention. 
    \methodname, implemented in Triton, improves the sparsity-efficiency tradeoff, outperforming a highly-optimized CUDA version of FlashAttention-2 in moderate sparsity regimes and yielding larger gains at high block sparsity.
    }
    \label{fig:sparsity_analysis}
\end{figure}

However, softmax attention is inherently \emph{dense} as every token receives nonzero probability mass.
For long inputs, this density can be undesirable---attention mass spreads over many irrelevant tokens~\citep{velickovic2025softmax}, and token representations become less distinguishable~\citep{barbero2024transformers}---motivating adaptive sparse alternatives. 
The \textit{\entmax transformation}~\citep{peters-etal-2019-sparse} provides such an alternative: it is differentiable, yields exact zeros in an input-dependent way, and has been shown to improve long-context generalization relative to softmax~\cite{vasylenko2025long}.
Like softmax, \entmax also requires computing a normalizer similar to  $\tau$ in \eqref{eq:softmax_tau} (to be described in \S\ref{subsec:entmax}); however, its evaluation is not additive and requires iterative root-finding methods. 
This has historically prevented \entmax attention from matching optimized softmax attention kernels in end-to-end training throughput.

While \adasplash~\citep{goncalves2025adasplash} has recently mitigated this problem by introducing a GPU-friendly implementation of \entmax attention, the provided algorithm still requires multiple expensive passes over the attention scores to compute $\tau$.
This limits efficiency, particularly in the forward pass and in moderate-sparsity regimes where sparsity is present but not extreme.

In this work, we introduce \methodname, a new and more efficient hardware-aware method for \entmax attention that reduces the computational cost of normalization.
As we show in Figure~\ref{fig:sparsity_analysis}, \methodname outperforms a highly-optimized CUDA implementation of FlashAttention-2 on moderate sparse regimes and can double the speedup at high sparsity cases.
The key idea of our method 
is to construct a compact histogram of attention scores \emph{on the fly} while streaming tiles through on-chip SRAM.
This histogram yields a provable lower bound on the true normalizer $\tau$, providing a high-quality initialization that allows the subsequent root solver to converge to the \emph{exact} solution in one (typically) or two iterations, without materializing dense intermediates.
Our contributions include:\footnote{Code available at: \url{https://github.com/deep-spin/adasplash}}
\begin{itemize}[leftmargin=*]
    \item \textbf{Normalization via on-chip histogram.}
    We derive a provable lower and upper bound on the \entmax normalizer $\tau$ by computing a compact histogram of streamed scores entirely in on-chip SRAM, enabling fast \entmax evaluation without forming dense intermediates.

    \item \textbf{One-pass refinement with a safeguarded hybrid solver.}
    Initialized from the histogram estimate, a single additional pass typically suffices to recover the exact normalizer $\tau^\star$ using a safeguarded hybrid solver, yielding near-single-iteration convergence in practice for $\alpha \in \{1.5, 2.0\}$.

    \item \textbf{Efficient exploitation of dynamic sparsity on GPU.}
    We design an optimized attention kernel in Triton with fine-grained tiling and a lightweight bit-packed encoding of nonzero blocks, enabling input-dependent sparsity to be exploited with negligible overhead.

    \item \textbf{Strong empirical results on synthetic and language modeling benchmarks.}
    Our results show that \methodname is not only faster than FlashAttention-2 on moderate sparse regimes, but also matches or outperforms standard softmax attention on short- and long-context downstream tasks.
\end{itemize}

\section{Background}

\subsection{Hardware Performance}

Modern GPUs are built for efficient parallel execution over a hierarchical memory system. High-bandwidth memory (HBM) provides large capacity but higher access latency than the smaller, faster on-chip SRAM~\cite{jia2018dissecting}. High performance therefore depends on efficient use of SRAM to reduce bottlenecks from frequent HBM traffic. GPUs run computation as kernels launched over thousands of threads grouped into thread blocks; data are staged from HBM into SRAM for computation and then written back. Kernel fusion is a core optimization that merges multiple operations into a single kernel, avoiding intermediate HBM reads/writes by directly producing final outputs. 
While compilers such as \texttt{torch.compile} can automate fusion for relatively simple operator chains~\citep{torchcompile}, attention mechanisms typically require custom strategies to reorder operations and optimize memory usage effectively.

\subsection{Standard Dot-Product Attention}

Given a set of matrices $\bm{Q}, \bm{K},\bm{V} \in \mathbb{R}^{n \times d}$ containing $d$-dimensional representations for $n$ queries, keys and values, the \textit{dot-product
self-attention} at a single head is computed in the following way~\citep{vaswani2017attention}:
\begin{equation}\label{eq:dotproduct-attention}
    \bm{S} = \frac{\bm{Q}\bm{K}^\top}{\sqrt{d}} \in \mathbb{R}^{n \times n}, \quad
    \bm{O} = \pi \left( \bm{S} \right) 
    \bm{V} \in \mathbb{R}^{n \times d}.
\end{equation}
The $\pi$ transformation maps rows to normalized probability vectors, with softmax (Equation~\ref{eq:softmax_tau}) being the most common choice.
Crucially, the normalization $\tau$  is a stable reduction along each row of $\bm{S}$ and can be computed online in a single pass~\citep{milakov2018online}.

\paragraph{FlashAttention.}
To address the costs of naive attention implementations, \citet{dao2022flashattention} introduced FlashAttention, an algorithm that avoids the materialization of %
intermediate quadratic attention matrices via a GPU-aware implementation of online softmax~\citep{milakov2018online}, bringing the overall memory complexity to $\bigO{n}$. 
The key idea of FlashAttention is to split the inputs $\bm{Q}, \bm{K}, \bm{V}$ into blocks, load them from slow GPU high bandwidth memory (HBM) to the fast GPU on-chip SRAM, then compute the attention output regarding those blocks and, at the end, scale the output by the right normalization factor. 
Later, FlashAttention-2~\citep{dao2023flashattention2} improved the original algorithm and effectively defined the algorithmic structure adopted by subsequent variants, such as FlashAttention-3~\citep{shah2024flashattention}, which introduced hardware-specific optimizations for NVIDIA Hopper GPUs
(e.g., TMA, WGMMA instructions, warp specialization) while preserving the same high-level algorithm.

\subsection{\entmax Transformation}\label{subsec:entmax}

Softmax-based attention is inherently {\it dense}, as every key receives a strictly positive weight. 
A principled \emph{differentiable} sparse alternative is the \entmax transformation \citep{peters-etal-2019-sparse}. For $\alpha>1$, \entmax can produce sparse probability vectors, and it interpolates between softmax ($\alpha\to1$) and sparsemax ($\alpha=2$, \citealt{martins2016softmax}):
\begin{equation}\label{eq:solution_entmax}
    \entmax(\bm{s}) = [(\alpha - 1)\bm{s} - \tau\bm{1}]_{+}^{\frac{1}{\alpha-1}},
\end{equation}
where $[\cdot]_{+}$ is the ReLU function, and $\tau \in \mathbb{R}$ is the (unique) normalizing constant which ensures the output is a valid probability distribution, $\sum_i \entmax (\bm{s})_i = 1$.
This means that coordinates with $(\alpha-1)s_i\le \tau$ become exactly zero.
Importantly, 
\entmax yields dynamic sparsity, where the pattern of zeros depends on the input $\bm{s}$.
Despite its flexibility, \entmax does not have a direct closed-form solution, so computing it requires more involved methods, which we discuss next.

\subsection{\entmax Computation}
\label{subsec:entmax_computation}
Computing \entmax is equivalent to finding the normalizing constant $\tau^\star$ satisfying $f(\tau^\star)=0$, where
\begin{equation}\label{eq:f_def}
f(\tau) \coloneqq -1 + \sum_{j=1}^n \left[ (\alpha-1)s_j - \tau \right]_+^{\frac{1}{\alpha-1}}.
\end{equation}
The function $f$ is continuous and strictly decreasing in $\tau$, hence the root $\tau^\star$ is unique~\citep{blondel2019learning}.

\paragraph{Sorting-based solvers.}
For $\alpha=1.5$ and $\alpha=2$ (sparsemax), specialized solvers can be derived by sorting $\bm{s}$ and exploiting the fact that the support is given by the top-$k$ entries~\citep{michelot1986finite,duchi2008efficient,condat2016fast,peters-etal-2019-sparse}.
These approaches are less attractive on GPU because sorting and support selection are expensive and hard to fuse.

\paragraph{Bisection (bracketing).}
For general $\alpha>1$, $\tau^\star$ can be found by a bracketing method such as bisection~\citep{blondel2019learning}, which only needs an interval containing the root.
Let $m = (\alpha-1)\max(\bm{s})$. 
Following \citet{peters-etal-2019-sparse}, $\tau^\star$ satisfies
\begin{equation}\label{eq:bisection-bounds}
m - 1 \,\leq\, \tau^\star \,\leq\, m - n^{1-\alpha},
\end{equation}
which provides a valid initial bracket.
While bisection is robust, it converges linearly and requires repeatedly evaluating $f$, motivating faster refinement strategies on GPU.

\paragraph{\adasplash.}
\citet{goncalves2025adasplash} proposed a GPU-oriented implementation of \entmax attention that exploits dynamic sparsity and IO-awareness.
Its main contribution is a hybrid \emph{Halley-bisection} solver for the normalization threshold $\tau^\star$, combining fast local convergence with bracketing guarantees.
Implemented as custom Triton~\citep{triton-paper} kernels, \adasplash can skip zero blocks in forward and backward passes, improving performance at very high sparsity via a naive block masking strategy.
However, refining $\tau^\star$ typically requires multiple iterations, each involving additional passes over the keys.
In addition, its naive block masking strategy incurs extra overhead to determine which blocks can be skipped.
Our method, described next, reduces the number of iterations and introduces a lightweight scheme to skip zero blocks.

\section{Our Method} \label{sec:method}

We present \methodname, a hardware-aware sparse attention mechanism that achieves efficient $\tau$ computation through a histogram-based approximation built entirely in on-chip SRAM. Our approach improves runtime compared to \adasplash while maintaining the theoretical guarantees and sparsity benefits of \entmax.

\subsection{Histogram Approximation} \label{sec:histogram_main}

Given attention scores $\bm{s}\in \mathbb{R}^n$, our goal is to compute the threshold $\tau^\star$ such that Equation~\ref{eq:f_def} satisfies $f(\tau^\star)=0$. Since \entmax is invariant to adding a constant to all scores, we apply the change of variables
\begin{equation} \label{eq:change_of_vars}
\bm{z} = (\alpha-1)\bm{s} - (m - 1)\bm{1},
\end{equation}
obtaining $\max(\bm{z}) = 1$.
We henceforth work with centered scores $\bm{z}$, which restricts both the active entries and the threshold search to the unit interval (see Equation~\ref{eq:bisection-bounds}).

\paragraph{Histogram construction.}
With this change of variables, we discretize the interval $[0,1]$ into $B$ equal-width bins with width $h=1/B$. Formally, each score $z_j \in [0,1]$ is mapped to a bin index as follows:
\begin{equation}
b(z_j) = \min\!\left(\lfloor B z_j \rfloor,\, B-1\right).
\end{equation}
At the same time we construct and maintain a histogram $\mathcal{H}$ in SRAM  to store bin counts,
\begin{equation}
\mathcal{H}_k = \bigl|\{\, j : b(z_j) = k \,\}\bigr|, \qquad k = 0,\dots,B-1.
\end{equation}
Importantly, this representation only requires $\mathcal{O}(B)$ storage (independent of $n$), and entries with $z_j < 0$ are not included in the histogram since they cannot belong to the active set.

\paragraph{Histogram objective.}
Now in objective~\eqref{eq:f_def}, we replace each score $z_j$ by the left endpoint of its bin. Concretely, if $z_j$ is assigned to bin $k = b(z_j)$, we score it by $\tilde z_j \coloneqq k/B$. This allows us to rewrite a discretized objective as a sum over bins:
\begin{align}
f_h(\tau)
&= -1 + \sum_{j=1}^n \left[\tilde z_j - \tau\right]_+^{\frac{1}{\alpha-1}} \nonumber \\
&= -1 + \sum_{k=0}^{B-1} \sum_{j : b(z_j)=k}
\left[\frac{k}{B} - \tau\right]_+^{\frac{1}{\alpha-1}} \nonumber \\
&= -1 + \sum_{k=0}^{B-1} \mathcal{H}_k \cdot
\left[\frac{k}{B} - \tau\right]_+^{\frac{1}{\alpha-1}} .
\label{eq:f_h}
\end{align}
Structurally, this construction replaces the original set of scores by $B$ distinct values with associated counts $H_k$, yielding a reduced problem of size $B \ll n$ with the same monotone structure as the exact normalization in Equation~\ref{eq:f_def}.
In turn, this reduced problem can be solved using any algorithm developed for \entmax normalization. In particular, exact solvers apply for $\alpha=1.5$ and $\alpha=2$ (see \S\ref{subsec:entmax_computation}).
The key difference is that, in our setting, these solvers operate on a compact histogram representation that fits entirely in SRAM, minimizing the computation overhead compared to operating on the full set of $n$ scores, which need to be recomputed from HBM. 
For completeness, we provide the corresponding solver adaptations for the histogram objective in Appendix~\ref{sec:histogram_solver}.
We now formalize the approximation quality of the histogram objective and establish guarantees on the accuracy of the resulting normalization.

\subsection{Theoretical Properties and Guarantees}
\label{subsec:theoretical_properties}
We first show that the histogram objective yields a conservative estimate of the true normalization threshold.
We then characterize how the smoothness of $f$ and $\alpha$-entmax depends on $\alpha$, motivating when safeguarded higher-order updates (e.g., Halley~\cite{halleys-method}) are appropriate.

\begin{proposition}[Histogram Lower Bound]
\label{prop:histogram-bound}
Let $\alpha > 1$ and $h = 1/B$ be the bin width. Let $f$ and $f_h$ be defined as in Equations~\ref{eq:f_def} and \ref{eq:f_h}. If $\tau^\star$ denotes the unique root of $f(\tau) = 0$ and $\tau_h$ the unique root of $f_h(\tau) = 0$, then:
\begin{equation}
\tau^\star - h < \tau_h \leq \tau^\star.
\end{equation}
Thus, the absolute error satisfies $0 \leq \tau^\star - \tau_h < h = \frac{1}{B}$.
\end{proposition}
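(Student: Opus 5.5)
Throughout, $f$ is read in the centered variables $\bm z$ of Equation~\ref{eq:change_of_vars}, i.e.\ $f(\tau) = -1 + \sum_j [z_j-\tau]_+^{1/(\alpha-1)}$, with $\tau^\star$ its root in these coordinates. Only entries with $z_j\ge 0$ matter for $\tau\ge 0$, and $\tau^\star\in[0,1)$ by Equation~\ref{eq:bisection-bounds}. The plan is a sandwich argument based on monotonicity of the map $\bm z\mapsto \sum_j [z_j-\tau]_+^{p}$ with $p=1/(\alpha-1)>0$. The bin assignment gives, for every $j$ with $z_j\in[0,1]$,
\begin{equation*}
\tilde z_j \le z_j < \tilde z_j + h .
\end{equation*}
The strict right inequality holds for all bins except possibly the top one. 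There $z_j\le 1 = (B-1)/B + h$, so the case $z_j=1$ needs care; I return to it below.

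\textbf{Upper bound $\tau_h\le\tau^\star$.} Since $t\mapsto [t]_+^p$ is nondecreasing, $\tilde z_j\le z_j$ gives $f_h(\tau)\le f(\tau)$ for all $\tau$. In particular $f_h(\tau^\star)\le f(\tau^\star)=0$. The function $f_h$ is nonincreasing, and strictly decreasing wherever it is $>-1$. I also need $f_h$ to have a root at all: this holds because $f_h(\tau)\to\infty$ as $\tau\to-\infty$ (any active entry contributes). So $f_h(\tau_h)=0\ge f_h(\tau^\star)$, and monotonicity gives $\tau_h\le\tau^\star$.

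\textbf{Lower bound $\tau_h>\tau^\star-h$.} The entries with $z_j<0$ are excluded from the histogram. They still do not contribute to $f$ at any $\tau\ge \tau^\star-h$ as long as that value is $\ge 0$. If instead $\tau^\star<h$, the bound $\tau_h>\tau^\star-h$ is weaker than something we can prove directly, and I would handle this case separately; see the last paragraph. Shifting $\tau$ by $h$ gives
\begin{equation*}
f_h(\tau-h)=-1+\sum_j[\tilde z_j+h-\tau]_+^p \ge -1+\sum_j [z_j-\tau]_+^p = f(\tau)
\end{equation*}
for $\tau \ge 0$. Evaluating at $\tau=\tau^\star$ gives $f_h(\tau^\star-h)\ge 0$, hence $\tau_h\ge\tau^\star-h$.

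\textbf{Main obstacle: strictness.} The inequality $\tilde z_j+h>z_j$ is strict except possibly for the max entry $z_j=1$ in the top bin. Take any index with $z_j=\max=1$. Then $[z_j-\tau^\star]_+>0$, because $\tau^\star<1$ (Equation~\ref{eq:bisection-bounds} gives $\tau^\star\le 1-n^{1-\alpha}$). For this entry, $\tilde z_j + h = (B-1)/B + h = 1 = z_j$, so it yields no strict gain. I therefore look for another index that does. If every active entry satisfies $z_j=1$, the inequality can become an equality, which would contradict strictness. So in the degenerate case I would modify the top bin's representative, or note that $\tilde z_j=1-h$ so $\tilde z_j+h=z_j$. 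Then I would argue strictness via $\tau^\star<h$ or some similar small case.

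My first attempt instead is to get strictness from strict monotonicity of $f_h$ on $\{f_h>-1\}$ combined with the strict gap $z_j<\tilde z_j+h$ for some active non-maximal entry. Failing that, I would accept $\tau_h\ge\tau^\star-h$ and check that the top-bin clamping only affects the single value $z=1$, adjusting the argument there.

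For the $z_j<0$ issue, I restrict to $\tau\ge 0$. The case $\tau^\star-h<0$ means the lower bound should follow from $\tau_h\ge 0$ or a direct check, and I would verify it separately.
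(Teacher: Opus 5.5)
Your argument for the non-strict bounds is correct and is the same sandwich argument the paper uses. From $\tilde z_j\le z_j\le \tilde z_j+h$ you get $f_h(\tau)\le f(\tau)\le f_h(\tau-h)$ for $\tau\ge 0$. Evaluating at $\tau^\star$ and using monotonicity of $f_h$ (strict where $f_h>-1$, with a root guaranteed by the nonempty top bin) yields $\tau^\star-h\le\tau_h\le\tau^\star$.

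The case split you plan for $z_j<0$ and $\tau^\star<h$ is unnecessary. The comparison is only used at $\tau=\tau^\star\ge 0$, where negative entries contribute nothing to $f$, and those entries are absent from $f_h$. So $f(\tau^\star)\le f_h(\tau^\star-h)$ holds even when $\tau^\star-h<0$. Do not route that case through $\tau_h\ge 0$ either, since $\tau_h$ can be negative (see below).

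The genuine gap is the strict inequality $\tau_h>\tau^\star-h$. It cannot be closed, because it is false in exactly the degenerate situation you isolated. Since $\max(\bm z)=1$ and $b(1)=\min(B,B-1)=B-1$, every maximal entry has $\tilde z_j+h=z_j$.

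\begin{itemize}
\item For $n=1$ (e.g.\ the first row of causal attention), $f(\tau)=-1+[1-\tau]_+^{1/(\alpha-1)}$ gives $\tau^\star=0$. Meanwhile $f_h(\tau)=-1+[1-h-\tau]_+^{1/(\alpha-1)}$ gives $\tau_h=-h=\tau^\star-h$.
\item An example with $\tau^\star>0$ is $\bm z=(1,1,0.1)$, $\alpha=2$, $B=8$. Here $\tau^\star=1/2$ and $\tau_h=3/8=\tau^\star-h$.
\end{itemize}

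So no small-$\tau^\star$ argument helps. Changing the top bin's representative would change $f_h$, and hence the statement.

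Comparing termwise at $\tau^\star$ shows exactly when strictness holds. We have $f_h(\tau^\star-h)>0$, equivalently $\tau_h>\tau^\star-h$, if and only if some binned entry with $z_j<1$ has $\tilde z_j+h>\tau^\star$. In particular this holds whenever the support contains a non-maximal entry, which is the condition your ``first attempt'' uses.

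The paper's proof gets strictness by asserting $k/B\le z_j<(k+1)/B$ for every binned score, which overlooks the clamp at $z_j=1$. Its termwise strict inequality also fails for inactive entries, where both sides vanish. So the provable statement is your fallback, $0\le\tau^\star-\tau_h\le h$, with strict inequality only under that nondegeneracy condition.
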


The proof is given in Appendix~\ref{sec:proof_lower_bound}. 
This result ensures that the histogram approximation never overestimates the true threshold ($\tau_h \leq \tau^\star$) and provides explicit control over the maximum approximation error through the bin width $h$. In particular, the error satisfies $0 \leq \tau^\star - \tau_h < h$, and the approximation converges to the true solution as $B \to \infty$.

Other binning approaches lead to different precision-sparsity tradeoffs. Using a \emph{right-edge} strategy $b(z_j) = \lceil B \cdot z_j \rceil / B$ reverses the inequality to $\tau_h \geq \tau^\star$, yielding a sparser output at the cost of potentially discarding relevant scores, while a \emph{centered} approach $b(z_j) = (\lfloor B \cdot z_j \rfloor + 0.5)/B$ may provide a more balanced approximation but does not induces any a one-sided bound guarantee.
In \methodname, we adopt the left-edge binning strategy to ensure that the active support induced by $\tau_h$ contains the true support.

\paragraph{Smoothness and safe higher-order refinement.}

Following our histogram-based threshold initialization, we typically apply a posterior refinement step that evaluates $f$ and its derivatives at candidate thresholds.
Because each term $[z_j-\tau]_+^{\nicefrac{1}{\alpha-1}}$ may be 
non-smooth at $\tau=z_j$,
the existence and continuity of higher-order derivatives depend on $\alpha$.
The next proposition makes this connection explicit, and we use it to justify when higher-order updates are well-behaved.\footnote{We say $f \in \mathcal{C}^t$ if $f$ has continuous derivatives up to $t\textsuperscript{th}$ order.} 

\begin{proposition}[Continuity of $f$ and \entmax.]
\label{prop:continuity_f}
Let $f(\tau)$ be defined as in Equation~\ref{eq:f_def} for a given $\alpha$.
If $1 < \alpha < \frac{t+1}{t}$, then $f \in \mathcal{C}^{t}$ and, consequentially, \entmax $\in \mathcal{C}^{t}$.
\end{proposition}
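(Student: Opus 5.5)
The plan is to reduce the statement to a one-dimensional regularity fact about a single summand. Set $p \coloneqq \frac{1}{\alpha-1}$. The hypothesis $1<\alpha<\frac{t+1}{t}$ is equivalent to $p>t$, since $\alpha-1<\frac1t$. Then
\[
f(\tau) = -1+\sum_{j=1}^n g(z_j-\tau), \qquad g(x)\coloneqq [x]_+^{p},
\]
and $\mathcal{C}^t$ is closed under finite sums and under composition with affine maps. It therefore suffices to show $g\in\mathcal{C}^t(\mathbb{R})$ whenever $p>t$.

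\textbf{Regularity of $g$.} Away from $x=0$ the function $g$ is smooth: it is identically $0$ for $x<0$ and equals $x^p$ for $x>0$. The only issue is the kink at the origin. By induction on $r\le t$, I will show that $g^{(r)}(x) = p(p-1)\cdots(p-r+1)\,[x]_+^{p-r}$, with $g^{(r)}(0)=0$. The inductive step at $x=0$ uses the difference quotient
\[
\frac{g^{(r-1)}(x)-g^{(r-1)}(0)}{x} = c\,\frac{[x]_+^{p-r+1}}{x}.
\]
This vanishes for $x<0$. For $x>0$ it equals $c\,x^{p-r}\to 0$, because $p-r>0$ for every $r\le t<p$. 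Continuity of $g^{(t)}$ at $0$ follows the same way, since $[x]_+^{p-t}\to 0$ as $x\to0$ (again because $p-t>0$). Hence $g\in\mathcal{C}^t$, and so $f\in\mathcal{C}^t$. This step is where the strict inequality is essential. At $p=t$ the $t$-th derivative would jump at the origin.

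\textbf{From $f$ to \entmax.} Write $F(\bm{s},\tau) = -1+\sum_j g((\alpha-1)s_j-\tau)$. By the argument above, $F$ is $\mathcal{C}^t$ jointly in $(\bm{s},\tau)$. At the root we have
\[
\partial_\tau F(\bm{s},\tau^\star) = -p\sum_j [(\alpha-1)s_j-\tau^\star]_+^{p-1} < 0.
\]
This is strictly negative because the support is nonempty: some entry satisfies $(\alpha-1)s_j>\tau^\star$, otherwise the output could not sum to $1$. The implicit function theorem (the $\mathcal{C}^t$ version) then gives that $\tau^\star(\bm{s})$ is $\mathcal{C}^t$ in $\bm{s}$. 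Each output coordinate is $g((\alpha-1)s_i-\tau^\star(\bm{s}))$, a composition of $\mathcal{C}^t$ maps, so \entmax $\in\mathcal{C}^t$.

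\textbf{Main obstacle and its resolution.} I expect the main obstacle to be the case $t=1$ when $p$ is close to $1$, that is, $\alpha$ near $2$. There $\partial_\tau F$ involves $[x]_+^{p-1}$ with small exponent, and one must confirm that it is strictly nonzero at the root. The nonempty-support argument handles this: the active coordinates have strictly positive arguments, so each contributes a strictly positive term. The remaining steps are routine bookkeeping.
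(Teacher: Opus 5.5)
Your proof is correct and follows the same two-part route as the paper. First, $f\in\mathcal{C}^t$ because the exponent $\frac{1}{\alpha-1}-k>0$ for all $k\le t$, so the kink contributions vanish. Second, the implicit function theorem applies, with $f'(\tau^\star)<0$ guaranteed by the nonempty support, and composition then gives \entmax $\in\mathcal{C}^t$.

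Your single-summand difference-quotient argument for $[x]_+^{p}$ is a slightly more careful version of the paper's one-sided-limit analysis of $s_k$: it also checks that the derivatives actually exist at the kinks. Working with the uncentered $\bm{s}$ also sidesteps the paper's max-centering and its restricted domain $D$.
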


The proof is given in Appendix~\ref{sec:proofs_continuity}.
Proposition~\ref{prop:continuity_f} clarifies which higher-order updates are numerically safe.
When $\alpha \in (1,1.5]$, $f''$ 
is bounded, so Halley updates (which use $f''$) are numerically stable.\footnote{
In fact from the monotonicity of $f''$ we have $|f''(\tau)| \leq n(2-\alpha)/(\alpha-1)^2$.
This is an upper bound on the Lipschitz constant of $f'$, which by Kantorovich's theorem controls the interval of convergence for Newton's method \citep{kantorovich}.}
When $\alpha\in(1.5, 2]$, $f'$ remains well-behaved but $f''$ becomes unbounded, making Newton updates more robust than Halley in practice. 
When $\alpha > 2$, $f'$ also becomes unbounded, and thus gradient-based methods are less stable.
Based on this, we propose a new safeguarded \textbf{hybrid solver}: 
we apply Halley steps for $\alpha \leq 1.5$, Newton steps for $1.5< \alpha \leq 2$, and secant steps for $\alpha\ge 2$, always falling back to bisection whenever the proposed update falls outside the brackets.
Combined with Proposition~\ref{prop:histogram-bound}, this algorithm yields a refinement procedure that is both provably safe and fast in practice. 
We now outline the GPU-aware implementation of \methodname.

\subsection{GPU-Aware Implementation}
\label{subsec:gpu_implementation}

Our implementation orchestrates multiple coordinated passes over the key matrix, and carefully exploits memory hierarchies for data movement between HBM and SRAM. 
Concretely, we perform a grid loop over $T_r = \lceil n / B_r \rceil$ query blocks, and for a specific $i$\textsuperscript{th} query block $\bm{Q}_i \in \mathbb{R}^{B_r \times d}$ attending over key blocks $\{\bm{K}_j\}_{j=1}^{T_c}$ where $T_c = \lceil n/B_c \rceil$, our method proceeds as follows.%

\paragraph{Phase 1 (maximum).} Stream key blocks to compute the row-wise maximum $m_i$, which is scaled by $(\alpha - 1)$.

\begin{figure}
    \centering
    \includegraphics[width=0.9\linewidth]{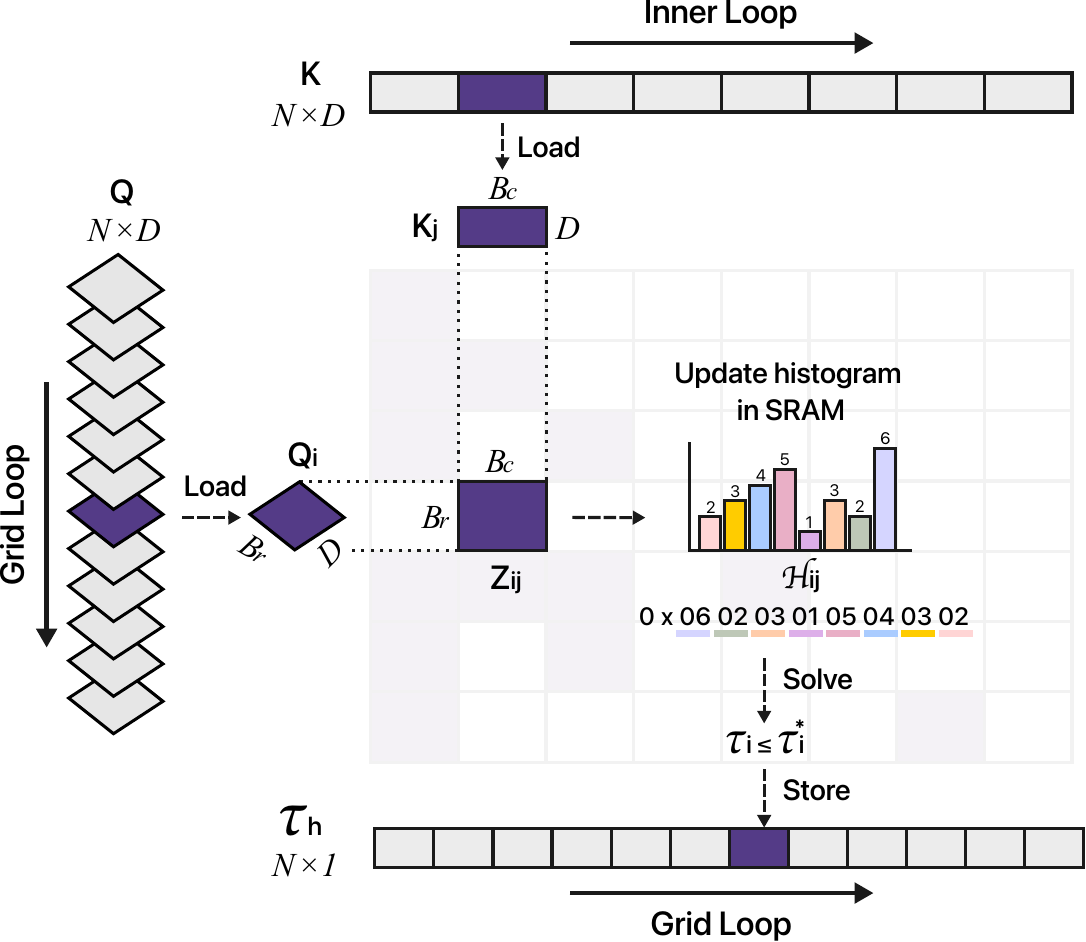}
    \vspace{-0.1cm}
    \caption{Diagram of the \methodname histogram kernel. For each query block $\bm{Q}_i$ and key block $\bm{K}_j$, we compute a score block $\bm{Z}_{ij}$, update a per-row histogram $\mathcal{H}$ in SRAM, and use it to estimate an initial threshold $\tau_h$ for each query.
    }
    \label{fig:diagram}
\end{figure}

\paragraph{Phase 2 (histogram).}
Using $m_i$, we center and scale scores into $[0,1]$ and build a $B$-bin per-row histogram entirely in SRAM while streaming over key blocks. To avoid atomics, we maintain local histogram accumulators for each position $(i,j)$ in a query-key block and reduce them across columns after all keys are processed. Each accumulator packs $B$ bins (representing bin counts) into a single $w$-bit integer ($b=w/B$ bits per bin). A score in bin $k$ updates its respective \textit{local} accumulator via $\mathcal{H}^{\text{local}} \leftarrow \mathcal{H}^{\text{local}} + 2^{kb}$. 
Final bin counts are recovered using vectorized shift-and-mask operations with final reductions. 
Beyond fitting in SRAM, the histogram also provides an algorithmic speedup since binning implicitly orders scores by value. As a result, for $\alpha\in\{1.5,2\}$, we can apply exact solvers directly to the bin counts and recover $\tau_h$ without an explicit sort step.
The diagram in Figure~\ref{fig:diagram} illustrates this process.\looseness=-1

\paragraph{Phase 3 (refinement).} Starting from $\tau_h$, we perform a fixed number of iterations to refine $\tau$ with a Hybrid solver that combines different root-finding methods (detailed in \S\ref{subsec:theoretical_properties}). 
In practice, we find that a single iteration is often sufficient for refinement due to the good accuracy of the histogram initialization. 
Simultaneously, we construct a binary block mask $\mathcal{M} \in \{0,1\}^{T_r \times T_c}$ indicating which blocks contain non-zero attention weights. The mask is stored using bitpacking, with each group of 32 consecutive column blocks encoded in a single \texttt{int32}, requiring only $\bigO{T_r \times T_c}$ bits of memory.
In contrast to \adasplash~\citep{goncalves2025adasplash}, this design avoids auxiliary index buffers and incurs a negligible overhead, even for long contexts. 

\paragraph{Phase 4 (output).} Using mask $\mathcal{M}$, we traverse only non-zero blocks via GPU-native find-next-set (\texttt{fns}) instructions to compute $\bm{O}_i = \sum_{j: \mathcal{M}_{ij}=1} \entmax[\alpha](\bm{Q}_i\bm{K}_j^\top, \tau) \bm{V}_j$, enabling $\mathcal{O}(|\mathcal{M}|)$ traversal complexity rather than $\mathcal{O}(T_r \times T_c)$.

\paragraph{Histogram capacity.}
The bitpacking scheme naturally limits capacity since each bin can count up to $2^b - 1$ items before overflow. With $B_c$ parallel accumulators per query block, the maximum capacity is
\begin{equation}
C_{\text{max}} = B_c \times (2^{w/B} - 1).
\end{equation}
For our default configuration ($w=64$, $B=8$, $B_c=64$), this yields $C_{\text{max}} = 16{,}320$ keys per query block, sufficient for sequences up to 16K tokens. When sequences exceed this capacity, we employ a periodic flushing of bin counts to a $B_c \times B$ accumulator sitting in SRAM to avoid overflow.
We provide more information on this in Appendix~\ref{sec:overflow_handling}.

\paragraph{Backward pass.} %
Crucially, gradients are nonzero only on the $\entmax$ support~\citep{peters-etal-2019-sparse}, we therefore 
reuse the block mask obtained in the forward pass to efficiently iterate only over nonzero blocks when computing the gradients with respect to queries, keys, and values. 
For this, we make use of native GPU instructions like \texttt{fns} and \texttt{popc} (population count) for efficient traversal. 
Further implementation details are presented in \S\ref{sec:implementation_details} and~\S\ref{sec:full_algorithms}, including the complete forward and backward algorithms.

\section{Experiments}
\label{sec:experiments}

Our goals are to 
(i) verify that our histogram-based initialization for $\tau$ reduces convergence steps;
(ii) quantify the training-time speedups of \methodname relative to softmax-based baselines; 
and (iii) study model accuracy and long-context capabilities 
of LLMs trained with \methodname.

\begin{figure}[t]
    \centering
    \includegraphics[width=1.0\columnwidth]{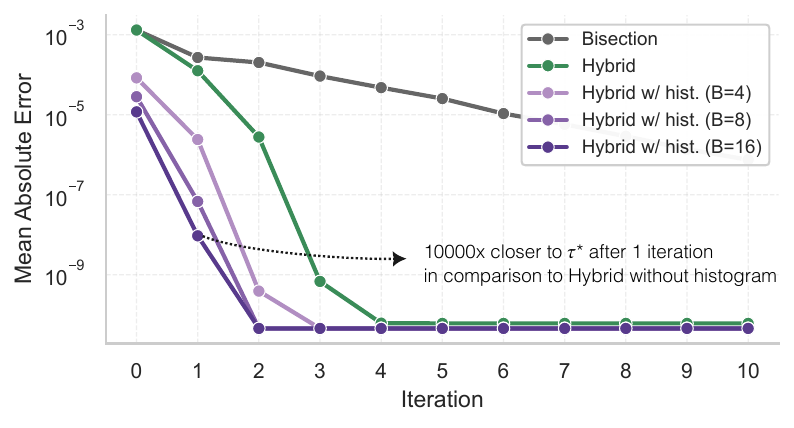}
    \vspace{-0.5cm}
    \caption{Comparison of mean absolute error of previous root-finding methods and our Hybrid approach with histogram initialization, measured against the exact solution for $\alpha = 1.5$. 
    }
    \label{fig:solvers_comparison}
\end{figure}

\begin{figure*}[t]
\centering
\includegraphics[width=0.95\textwidth]{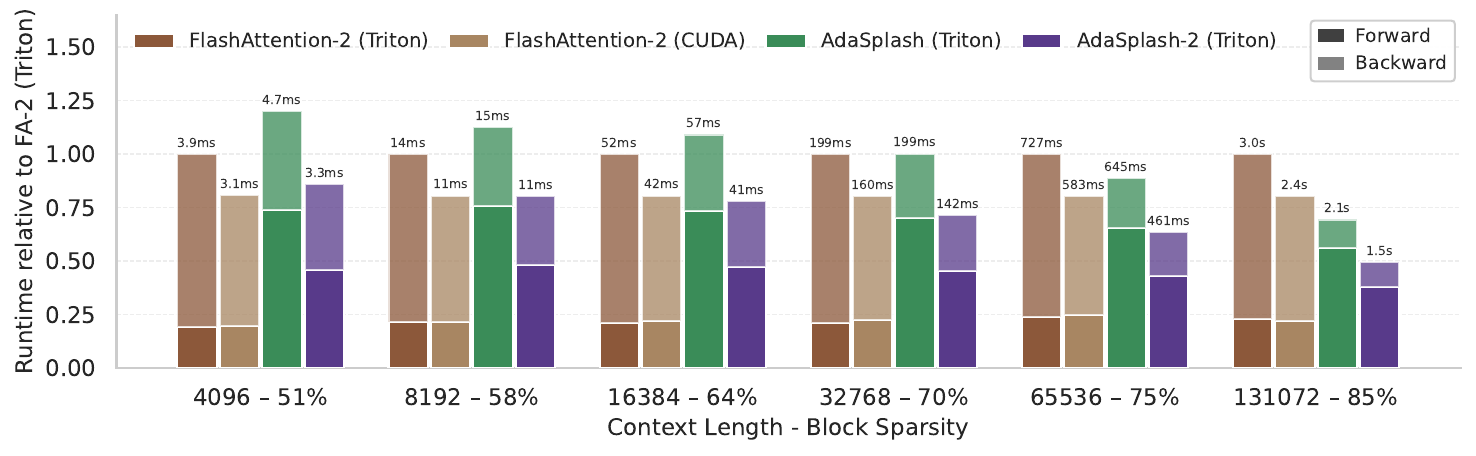}
\vspace{-0.25cm}
\caption{Runtime efficiency of causal self-attention implementations across context lengths of 4K-128K tokens with varying 64x64 block sparsity.
Bar heights are normalized to FlashAttention-2 (Triton), with the opaque part denoting forward and the lighter part denoting backward. Numeric labels report the total forward + backward step time. Lower bars represent faster runtimes.
}
\label{fig:runtimes_comparison}
\end{figure*}

\subsection{Normalization Solver Analysis}
\label{subsec:solver_analysis}

Before end-to-end GPU benchmarks, we first isolate the normalization step and study how quickly different root-finding strategies recover the entmax threshold $\tau^\star$.
Concretely, we follow \citet{goncalves2025adasplash} and sample a Gaussian score vector of length $n=4096$ and compute a high-accuracy reference threshold $\tau^\star$ using an exact method (see \S\ref{subsec:entmax_computation}). 
We then run (i) bisection; (ii) our safeguarded hybrid solver without histogram initialization;\footnote{For $\alpha=1.5$, our Hybrid solver empirically matches the Halley-Bisection solver of \citet{goncalves2025adasplash}.} and (iii) the same hybrid solver initialized with the histogram estimate $\tau_h$ using $B\in\{4,8,16\}$ bins. Figure~\ref{fig:solvers_comparison} reports the mean absolute error averaged over $10$ runs for $\alpha=1.5$.

After the first iteration, the histogram initialization places the solver orders of magnitude closer to the true root, making subsequent refinement into a near one-pass correction.
As expected, more bins lead to better initial estimates.%

\subsection{Efficiency Benchmark}

We implement \methodname, described in Section~\ref{sec:histogram_main}, in Triton. We compare it with FlashAttention-2~\citep{dao2023flashattention2} implemented in CUDA and in Triton,\footnote{\href{https://github.com/triton-lang/triton/blob/main/python/tutorials/06-fused-attention.py}{https://github.com/triton-lang/triton/blob/main/python/tutorial\\s/06-fused-attention.py}} and \adasplash written in Triton~\citep{goncalves2025adasplash}.
A natural question is \textbf{why compare with FlashAttention-2 rather than FlashAttention-3}.
Our efficiency experiments target NVIDIA Ampere GPUs, for which FlashAttention-2 is the reference implementation. FlashAttention-3~\citep{shah2024flashattention} introduces hardware-specific optimizations for NVIDIA Hopper GPUs (e.g., TMA, WGMMA instructions, warp specialization) while preserving the same high-level algorithm. This choice allows us to isolate algorithmic effects independent of such hardware tuning.
We note that the low-level optimizations in FlashAttention-3 are largely orthogonal to \methodname and can be incorporated with additional engineering effort.
All runtime benchmarks are performed in a single A6000 NVIDIA GPU.

\paragraph{Sparsity-speed tradeoff.} Figure~\ref{fig:sparsity_analysis} shows runtime as a function of block sparsity for random query, key and value tensors drawn from a zero-mean Gaussian. 
Following \citet{goncalves2025adasplash}, we control the block sparsity of the attention probability matrix by changing the query variance, 
set head dimension to $d=64$ and use \texttt{bf16} precision.
\methodname consistently outperforms \adasplash and, unlike FlashAttention, benefits from increased block sparsity, surpassing both Triton and CUDA implementations at moderate to high sparsity regimes.
In the limit, \methodname achieves more than a $2\times$ speedup over both variants of FlashAttention-2.

\paragraph{Context scaling.}
In this experiment, we do not hand-tune the input sparsity. Instead, we use the block-sparsity ratios learned by a 1B-parameter transformer language model 
at different sequence lengths (we present the full language modeling results in the next subsection). 
We describe how these ratios are extracted from our models and provide block sparsity heatmaps across sequence lengths in \S\ref{subsec:app_heatmaps}.
Figure~\ref{fig:runtimes_comparison} reports the average per-step training runtime (forward + backward) as a function of context length, with bar heights normalized to FlashAttention-2 Triton (lower is better) and decomposed into forward and backward contributions.

As expected, \adasplash is  slower than FlashAttention-2 in the forward pass due to the additional passes required to compute the normalization threshold~$\tau$ (phases 1, 2, and 3 in \S\ref{subsec:gpu_implementation}). However, as context length increases, block sparsity naturally emerges, reducing the forward pass gap.
In the backward pass, even at shorter contexts, existing sparsity already yields faster runtimes for \methodname, with the advantage increasing at longer contexts. 
Since the backward pass dominates training time, 
these gains are substantial.
This suggests that as models scale to longer contexts, \methodname can translate naturally emerging sparsity into training efficiency gains, contrasting with the dense nature of softmax-based attention.\looseness=-1

\begin{table*}[t]
  \centering
  \small
  \caption{RULER benchmark results for 1B parameter models trained up to 32K context-length. Best average results are in \textbf{bold}.}
  \label{tab:1b_ctx_ext_results}
  \setlength{\tabcolsep}{3.8pt}
  \begin{tabular}{llccccccccccccccc}
    \toprule
    \bf Model & \bf Len. &
    \bf MK-1 & \bf MK-2 & \bf MK-3 & \bf MQ & \bf MV &
    \bf NIAH-1 & \bf NIAH-2 & \bf NIAH-3 &
    \bf CWE & \bf FWE & \bf VT & \bf QA-H & \bf QA-S & \bf Avg.  \\
    \midrule
            & 4K    & 80.0 & 62.2 & 38.2 & 47.6 & 43.4 & 100.0 & 100.0 & 98.8 & 5.1 & 24.9 & 8.0 & 24.8 & 35.0 & 51.4 \\
    Softmax & 8K    & 78.4 & 45.0 & 30.2 & 36.7 & 31.0 & 100.0 & 99.2 & 98.2 & 0.1 & 24.3 & 2.8 & 22.4 & 20.2 & 45.3 \\
    (RoPE)  & 16K   & 67.6 & 10.0 & 14.8 & 22.1 & 22.7 & 100.0 & 99.2 & 99.2 & 0.0 & 19.0 & 0.2 & 17.0 & 18.7 & 37.7 \\
            & 32K   & 41.4 & 8.2  & 0.8  & 17.4 & 20.8 & 96.2  & 61.6 & 75.6 & 0.0 & 6.3  & 0.0 & 16.8 & 10.1 & 27.3 \\
    \midrule
            & 4K    & 75.4 & 26.0 & 23.6 & 29.9 & 52.3 & 100.0 & 100.0 & 98.8 & 36.3 & 32.3 & 3.9 & 25.8 & 36.7 & 49.3 \\
    Entmax  & 8K    & 71.8 & 19.4 & 8.6  & 22.5 & 42.5 & 100.0 & 99.8 & 94.6 & 8.9  & 23.7 & 3.6 & 23.2 & 19.6 & 41.4 \\
    (RoPE)  & 16K   & 53.2 & 8.6  & 1.0  & 12.9 & 22.0 & 100.0 & 81.4 & 79.4 & 0.1  & 18.6 & 1.2 & 18.2 & 20.8 & 32.1 \\
            & 32K   & 33.2 & 4.4  & 0.6  & 7.5  & 12.9 & 97.0  & 40.8 & 64.2 & 0.0  & 15.0 & 0.2 & 18.6 & 9.2  & 23.4 \\
    \midrule
            & 4K    & 88.2 & 29.8 & 13.0 & 28.9 & 27.9 & 100.0 & 100.0 & 100.0 & 35.3 & 27.4 & 14.8 & 21.8 & 37.9 & 48.1 \\
    Softmax & 8K    & 78.4 & 38.6 & 12.6 & 33.2 & 34.1 & 100.0 & 100.0 & 99.0 & 27.0 & 26.3 & 12.3  & 19.2 & 22.9 & 46.4 \\
    (NAPE)  & 16K   & 85.6 & 19.6 & 7.4 & 26.8 & 30.6 & 100.0 & 100.0 & 98.8 & 18.0 & 25.6 & 9.8 & 20.0 & 21.3 & 43.3 \\
            & 32K   & 66.6 & 4.2 & 2.0 & 26.4 & 26.5 & 100.0 & 97.0 & 98.0 & 8.3 & 3.3 & 7.7 & 19.0 & 19.5 & 36.8 \\
    \midrule
            & 4K    & 81.2 & 12.2 & 27.2 & 51.9 & 55.6 & 100.0 & 100.0 & 79.8 & 48.9 & 58.1 & 31.4 & 30.2 & 34.5 & \bf 54.7 \\
    Entmax  & 8K    & 73.4 & 4.6 & 17.6 & 62.3 & 31.5 & 100.0 & 100.0 & 84.0 & 33.7 & 48.3 & 38.4 & 29.0 & 26.0 & \bf 49.9 \\
    (NAPE)  & 16K   & 79.2 & 1.4 & 9.8 & 38.7 & 24.0 & 100.0 & 100.0 & 74.2 & 22.9 & 51.1 & 34.5 & 27.2 & 27.5 & \bf 45.4 \\
            & 32K   & 63.0 & 1.8 & 4.8 & 25.8 & 16.8 & 100.0 & 92.0 & 76.0 & 11.7 & 45.8 & 27.4 & 25.0 & 22.1 & \bf 39.4 \\
    \bottomrule
  \end{tabular}
\end{table*}

\subsection{Language Modeling Benchmarks}
\label{sec:llm_results}

To evaluate the effectiveness of \entmax attention in language modeling tasks, we train 350M and 1B parameters versions of the LLaMA-3 architecture \citep{grattafiori2024llama3herdmodels} using \methodname with $\alpha = 1.5$. 
Our experimental design focuses on general capabilities and long-context performance and, thus, has two distinct settings: (i) one version that does pretraining from scratch 
for 50B tokens from the DCLM-Edu dataset~\citep{allal2025smollm2smolgoesbig}, and (ii) an alternative version with a context extension phase with the ProLong \citep{gao-etal-2025-train} methodology in the final 20\% of tokens, endowing our models with 32K context length. 
Furthermore, following \citet{vasylenko2025long}, who show that RoPE can be suboptimal for entmax attention, we also evaluate their proposed positional encoding scheme, NAPE: within each layer, half of the heads use no positional encoding (NoPE; \citealt{kazemnejad2023impact}) and the other half use ALiBi~\citep{press2022train}.
We compare \methodname runs against softmax baselines with RoPE~\citep{su2024roformer} and with NAPE to ensure a fair comparison. 
Additional experimental details are provided in \S\ref{sec:lm-app}.

\paragraph{Long-context results.} 
We start by investigating the performance of our models on the full RULER benchmark~\citep{hsieh2024ruler}, evaluated up to 32K context length. 
The results are presented in Table~\ref{tab:1b_ctx_ext_results}.
Overall, \entmax with NAPE achieves the strongest average performance across all sequence lengths, outperforming both softmax baselines. 
Among the different tasks, we note that the gains are particularly pronounced on Variable Tracking (\textit{VT})---a core subproblem underlying complex reasoning~\citep{feng2024how,dai-etal-2024-representational}---as well as on common and frequent word extraction (\textit{CWE} and \textit{FWE}), which require precise aggregation of the relevant input spans.

\begin{table}[t]
  \centering
  \small
  \caption{Results across context lengths for the In-Context Learning tasks in the HELMET benchmark. Best average result is in \textbf{bold}.}
  \label{tab:HELMET_results}
  \setlength{\tabcolsep}{3.0pt}
  \begin{tabular}{llcccccc}
    \toprule
    \bf Model & \bf Len. &
    \bf TREC-C & \bf TREC-F & \bf NLU &
    \bf B77 & \bf C150 & \bf Avg. \\
    \midrule
     Softmax          & 8K  & 62.4 & 33.6 & 12.6 & 6.0  & 13.6 & 25.6 \\
     (RoPE)   & 16K & 67.2 & 42.2 & 26.4 & 10.8 & 25.8 & 34.5 \\
         & 32K & 69.2 & 47.8 & 25.8 & 12.8 & 35.6 & 38.2 \\
    \midrule
    Entmax & 8K  & 63.2 & 26.6 & 25.4 & 8.4  & 20.2 & 28.8 \\
    (RoPE) & 16K & 66.4 & 35.8 & 32.8 & 13.8 & 27.4 & 35.2 \\
      & 32K & 66.8 & 42.8 & 36.2 & 16.8 & 34.0 & 39.3 \\
    \midrule
     Softmax        & 8K  & 58.8 & 30.8 & 27.2 & 13.4 & 32.6 & 32.6 \\
     (NAPE)         & 16K & 64.4 & 35.8 & 38.6 & 20.6 & 43.2 & 40.5 \\
            & 32K & 71.2 & 45.6 & 41.4 & 21.2 & 51.0 & 46.1 \\
    \midrule
     Entmax        & 8K  & 78.0 & 43.8 & 44.6 & 19.4 & 41.0 & \textbf{45.4} \\
     (NAPE) & 16K & 83.4 & 56.4 & 58.0 & 25.0 & 57.0 & \textbf{56.0} \\
      & 32K & 85.0 & 62.4 & 60.6 & 34.4 & 68.4 & \textbf{62.2} \\
    \bottomrule
  \end{tabular}
\end{table}

Next, we evaluate the models' in-context learning (ICL) performance using the ICL subset from the HELMET benchmark~\citep{yen2025helmet}, which measures few-shot performance across increasing context lengths.
Table~\ref{tab:HELMET_results} shows that \entmax substantially improves ICL accuracy, especially when paired with NAPE, achieving the highest average score at every tested context length and outperforming softmax baselines.
Notably, while all models improve as the context grows from 8K to 32K, we observe particularly large gains with \entmax + NAPE, suggesting that sparsity helps the model in leveraging the relevant in-context examples, 
reinforcing the finding that \entmax + NAPE constitutes a strong synergy for long-context modeling.%

\begin{table*}[t]
  \centering
  \small
  \caption{Short-context benchmark results for 350M and 1B models with 4K context length. 
  Scores are computed with the OLMES framework using the \textit{core\_9mcqa} suite.
  Best perplexity and average results are in \textbf{bold}.}  
  \label{tab:short_ctx_lm_results}
  \setlength{\tabcolsep}{4pt}
  \begin{tabular}{lcccccccccccc}
    \toprule
    \bf Model  & \bf LMB (ppl) & \bf LMB & \bf ARC-E & \bf ARC-C & \bf CSQA & \bf HS &  \bf OBQA & \bf PIQA & \bf SocialQA & \bf WG & \bf Avg. \\
    \midrule
     \textcolor{gray}{\textit{350M params.}} \\

    Softmax with RoPE & 23.93 & 40.6 & 61.9 & 32.1 & 50.5 & 41.6 & 38.4 & 66.6 & 45.1 & 48.9 & 47.3 \\
    Entmax with RoPE & 22.36 & 39.5 & 63.0 & 34.8 & 50.5 & 40.2 & 39.0 & 64.8 & 44.1 & 53.4 & 47.7 \\
    Softmax with NAPE &  19.23 & 41.2 & 61.8 & 33.4 & 47.6 & 40.5 & 38.0 & 64.8 & 43.5 & 52.7 & 47.1 \\
    Entmax with NAPE & \bf 18.62 & 42.4 & 61.9 & 33.0 & 51.2 & 41.0 & 39.2 & 66.0 & 44.4 & 53.7 & \bf 48.1 \\

    \cmidrule(lr){1-13}
    \textcolor{gray}{\textit{1B params.}}\\
    Softmax with RoPE & 15.01 & 44.7 & 69.0 & 36.0 & 58.7 & 49.6 & 42.8 & 70.0 & 46.4 & 55.6 & 52.5 \\
    Entmax  with RoPE & 15.76 & 43.9 & 65.4 & 36.6 & 56.2 & 47.4 & 41.6 & 69.8 & 45.7 & 54.2 & 51.2 \\
    Softmax with NAPE & 11.97 & 48.0 & 69.3 & 37.4 & 56.7 & 48.8 & 45.2 & 69.5 & 46.5 & 55.2 & 53.0 \\
    Entmax  with NAPE & \bf 11.42 & 49.2 & 67.7 & 39.9 & 57.3 & 48.7 & 45.0 & 68.3 & 47.1 & 55.1 & \bf 53.1 \\
    \bottomrule
  \end{tabular}
\end{table*}

\paragraph{Short-context results.}
For evaluating short-context performance, we follow the few-shot prompting strategy under the OLMES evaluation standard~\citep{gu2025olmesstandardlanguagemodel}.
The results are shown in Table~\ref{tab:short_ctx_lm_results}. 
We first observe a substantial perplexity gap between softmax and \entmax models on LAMBADA (\textit{LMB}).
Second, we note that \entmax models are on par or better across the board on accuracy-based downstream tasks at both model scales (350M and 1B params.). 
Taken together, these results highlight the strengths of sparse attention by not just outperforming dense models on long-context settings, but also on short-context tasks.

\section{Related Works}

\paragraph{\entmax computation.}
Exact computation exists for special cases such as $\alpha \in \{1.5, 2\}$ using sorting-based solvers~\citep{duchi2008efficient,condat2016fast,peters-etal-2019-sparse}. For general $\alpha$, the normalization $\tau$ can be computed using root-finding procedures~\citep{blondel2019learning}. However, these approaches remain suboptimal due to slow convergence or reliance on complex data structures and sorting operations, which are difficult to optimize for hardware.\looseness=-1

\paragraph{Sparse attention mechanisms.} 
Sparse attention has been widely studied through \emph{fixed} sparsity patterns~\citep{zaheer2020bigbird,Beltagy2020Longformer}.
Recent methods aim for \emph{data-dependent} sparsity, commonly via top-$k$ selection followed by softmax over the selected subset~\citep{yuan-etal-2025-native,deepseekai2025deepseekv32pushingfrontieropen,nawrot2025sparse}. 
Top-$k$ is attractive for inference but can be unstable in training due to discontinuities.
In contrast, $\alpha$-entmax induces exact, differentiable, input-adaptive sparse attention. 
\methodname focuses on making \entmax attention efficient during training; while extending it to inference is non-trivial as the forward pass requires multiple key scans, and we leave a fully optimized inference kernel for future work.

\paragraph{GPU-aware attention for training.}
FlashAttention and its successors~\citep{dao2022flashattention,dao2023flashattention2,shah2024flashattention} provide highly optimized CUDA kernels for softmax attention via fused normalization and tiled accumulation that exploit GPU memory hierarchies.
FlexAttention~\citep{dong2024flexattentionprogrammingmodel} offers a programmable interface for custom masks/score modifications while using similar fused, tiled execution. 
\adasplash~\citep{goncalves2025adasplash} offers a hardware-aware implementation of $\alpha$-entmax attention 
by solving $\tau$ with iterative refinement, 
but incurs considerable overhead due to repeated scans over keys and naive block-masking.\looseness=-1

\paragraph{Long-context modeling.}
A growing body of work studies why transformer performance degrades as context length grows, attributing failures to the softmax function itself such as attention dispersion~\citep{zhai2023stabilizing,velickovic2025softmax} and representational collapse~\citep{barbero2024transformers,arroyo2025on}. Complementary lines improve extrapolation through positional bias design~\citep{press2022train,jelassi2024repeat} and through length/entropy-aware scaling~\citep{peng2024yarn,nakanishi2025scalable,zhang2024extending}. In this context, a sparse learnable-temperature alternative with $\alpha$-entmax have been proposed as a direct way to mitigate these issues~\cite{vasylenko2025long}.

\section{Conclusion}

In this work, we introduced \methodname, a hardware-aware efficient implementation of \entmax attention.
Our key idea is to construct a compact histogram of streamed attention scores in on-chip SRAM, 
producing 
a strong initialization for the \entmax threshold $\tau$, and reducing refinement to a small (typically 1-2) number of steps while integrating naturally with GPU execution.
To leverage sparsity, \methodname uses a lightweight bitpacked block mask to skip zero blocks efficiently.
Empirically, \methodname delivers training speedups over prior \entmax kernels and, in moderate-to-high block-sparsity regimes, matches or exceeds FlashAttention-2, with gains driven primarily by a substantially faster backward pass. 
In language modeling, \entmax models match or improve short-context performance and deliver clear gains on long-context evaluations.

\section*{Impact Statement}

Our method, \methodname, provides an efficient implementation of \entmax attention. 
Efficient attention mechanisms are crucial for scaling transformers to handle long-context sequences. 
As a result, the improved efficiency has potential applications in large-scale NLP applications, especially in cases where sparsity can be leveraged to reduce computational costs such as in long-context modeling. 
We do not foresee direct societal consequences from our method itself, but its integration into decision-making models may still reflect biases in training data. 
As such, we encourage careful evaluation when deploying models trained with \methodname in high-stakes applications, ensuring that efficiency gains do not overcome ethical concerns.

\section*{Acknowledgments}
We would like to the SARDINE lab team for the helpful discussions.
This work was supported  by the project DECOLLAGE (ERC-2022-CoG 101088763), by the Portuguese Recovery and Resilience Plan through project C645008882-00000055 (Center for Responsible AI), and by FCT/MECI through national funds and when applicable co-funded EU funds under UID/50008: Instituto de Telecomunicações.
Vlad Niculae is supported by the Dutch Research Council (NWO) via VI.Veni.212.228.
Edoardo M. Ponti is supported by the ERC Starting Grant AToM-FM (101222956).

\bibliography{example_paper}
\bibliographystyle{icml2025}

\newpage
\appendix
\onecolumn

\section{\entmax Transformation}
\label{sec:app_entmax}

The $\alpha$-entmax transformation of a score vector
$\bm{z}\in\mathbb{R}^n$ is defined as follows~\citep{peters-etal-2019-sparse}:
\begin{equation}
\alpha\text{-entmax}(\bm{z})
:=
\arg\max_{\bm{p}\in\triangle_n}
\bm{p}^\top \bm{z} + H_\alpha(\bm{p}),
\quad
\triangle_n := \{\bm{p} \in \mathbb{R}^n \,:\, \bm{p} \ge \mathbf{0}, \mathbf{1}^\top \bm{p} = 1\},
\end{equation}
where $H_\alpha(\bm{p})$ is the Tsallis($\alpha$) entropy~\citep{Tsallis1988}. Solving the optimization problem above corresponds to finding a threshold $\tau$ so that $\bm{p}^\star$ sums to $1$. 
Given $\tau$, we can easily evaluate \entmax as per Equation~\ref{eq:solution_entmax}, which we re-state here for easiness:
\begin{equation}
    \entmax(\bm{z}) = [(\alpha - 1)\bm{z} - \tau\bm{1}]_{+}^{\nicefrac{1}{\alpha-1}},
\end{equation}
where $[\cdot]_{+}$ is the ReLU function.
Figure \ref{fig:entmax_visualizations} illustrates how $\entmax(\bm{z})$ behaves for different choices of $\alpha$.
From the equation above, it is clear that coordinates with $(\alpha-1)s_i\le \tau$ become exactly zero. 
In other words, \entmax yields dynamic sparsity, where the pattern of zeros depends on the input. 
However, this flexibility comes at a computational cost: unlike softmax, \entmax does not have a direct closed-form solution but rather requires more involved methods, which we discuss next.

\paragraph{\entmax computation.}
Order-based algorithms have only been proposed
for $\alpha=2$~\citep{michelot1986finite,duchi2008efficient,condat2016fast} and $\alpha=1.5$ \citep{peters-etal-2019-sparse}. This family of algorithms builds upon the equivalence between $\tau$ and the ``active set'', denoted as $\mathcal{S} = \{j: z_j > \tau\}$.
At $\tau^\star$, this set indexes the nonzero terms in the expression of $f$, or, equivalently, the nonzero elements in the \entmax output.
If the active set contains an index, it must contain all indices with greater or equal value, so there are $n$ possible active sets $\mathcal{S}_1, \ldots, \mathcal{S}_n$,
where contains the indices of the $k$ largest values of $\bm{z}$.
Moreover, since the output must be a valid probability distribution, $\mathcal{S}_k$ is never empty.
At the true support size $k^\star$,  we have 
\begin{equation}
f(\tau) = f_{k^\star}(\tau) \coloneqq -1+\sum_{j \in \mathcal{S}_{k^\star}} (z_j - \tau)^{\nicefrac{1}{\alpha-1}}.
\end{equation}
Note that the ReLU function is no longer necessary as all terms are strictly positive.
Finding the root of $f_k$ for any $k$ can be done efficiently in the case of $\alpha=2$ (linear equation) and $\alpha=1.5$ (quadratic equation).

In contrast, root-finding algorithms apply for all values of $\alpha$.
Since $\max(\bm{z})=1$, following \citet{peters-etal-2019-sparse}, we have
\begin{equation}\label{eq:bounds_entmax}
0 \leq \tau^\star \leq 1-n^{1-\alpha}.
\end{equation}
Moreover, since $f$ is continuous and $f(0)\leq0$ and $f(1-n^{1-\alpha}) \geq 0$, the root
must be found in the interval.
\citet{blondel2019learning} propose a bisection or binary
search approach to finding $\tau^\star$. 
Similar in spirit,
\citet{goncalves2025adasplash} introduces a GPU-oriented solver of $\alpha$-entmax that uses a hybrid \emph{Halley-bisection} method, which combines the fast local convergence of higher-order root-finding~\citep{halleys-method} with the convergence guarantees of bisection.

\begin{figure}[t]
    \centering
    \includegraphics[width=1\linewidth]{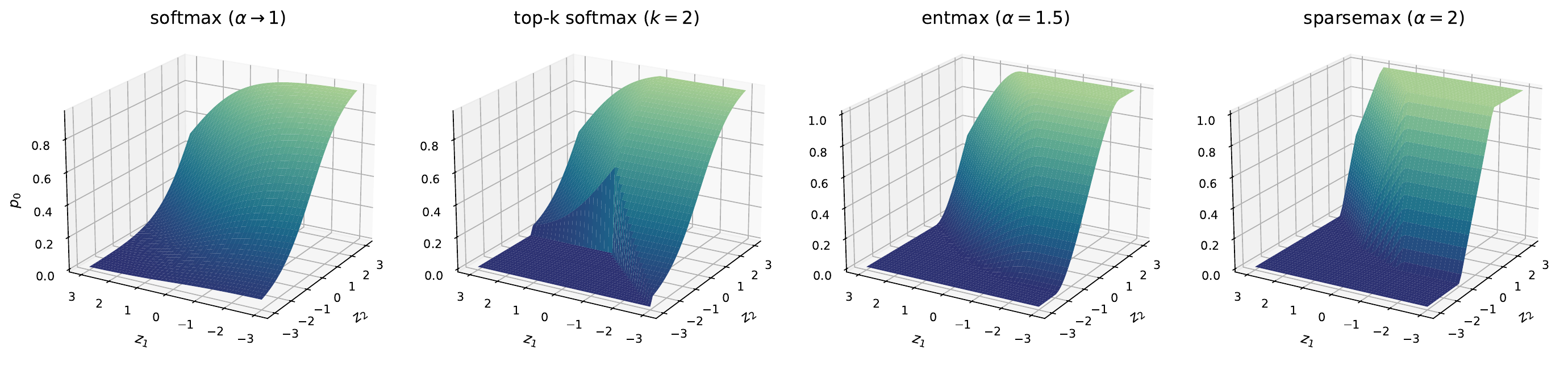}
    \caption{Visualization of \entmax for different values of $\alpha$. We also include top-$k$ softmax with $k=2$ for completeness.
    Each panel shows how the probability mass of $p_0$ varies for the input $\bm{z} = [0, z_1, z_2]$.
    For softmax, $p_0$ is always non-zero, regardless of $z_1$ and $z_2$. 
    As $\alpha$ increases, $\alpha$-entmax increasingly assigns exactly zero probability to $z_0$.
    While \entmax changes smoothly with the scores (yielding piecewise-smooth gradients), top-$k$ softmax changes the ``selected set'' abruptly and, consequently, these non-differentiable boundaries induce discontinuous gradients that might lead to training instabilities.
    }
    \label{fig:entmax_visualizations}
\end{figure}

\section{Proofs}

Throughout, we let $\alpha > 1$, represent scores as $\bm{s} \in \mathbb{R}^n$, and we work with centered and scaled scores $\bm{z} = (\alpha-1)\bm{s} - (m - 1)\bm{1}$, where $m = (\alpha - 1) \max(\bm{s})$, which ensures $\max(\bm{z}) = 1$. 
Also, we recall from Equation~\ref{eq:bounds_entmax} that $\tau \geq \max(\bm{z}) - 1 = 0$ and $\tau \leq \max(\bm{z}) - n^{1 - \alpha} = 1 - n^{1 - \alpha}$.

\subsection{Proof of Proposition~\ref{prop:histogram-bound}}
\label{sec:proof_lower_bound}

\begin{proof}[Proof of Proposition~\ref{prop:histogram-bound}] 
We divide the proof in four parts. 

\paragraph{Step 1: Establishing the sandwich inequality.}
By construction of the binning scheme, for any score $z_j \in [0, 1]$ assigned to bin $k$, we have:
\begin{equation}\label{eq:bin_bounds}
\frac{k}{B} \leq z_j < \frac{k+1}{B},
\end{equation}
which implies:
\begin{equation}\label{eq:score_bin_relation}
b_j \leq z_j < b_j + h, \quad \text{where } h = \frac{1}{B},
\end{equation}
where $b_j \coloneqq \frac{k}{B}$ denotes the left edge of the bin to which $z_j$ is assigned.

For scores $z_j < 0$, we have $[z_j - \tau]_+ = 0 = [b_j - \tau]_+$ for all $\tau > 0$, so these scores do not affect the proof. Consider the function $\phi(t) = [t]_+^{\frac{1}{\alpha-1}}$ for $t \in \mathbb{R}$, which is monotone non-decreasing and continuous for $\alpha > 1$.
For any fixed $\tau \in [0, 1]$, using the monotonicity of $\phi$ and inequality in Eq.~\ref{eq:score_bin_relation}:
\begin{align}
b_j - \tau &\leq z_j - \tau < b_j + h - \tau \\
\implies \quad [b_j - \tau]_+ &\leq [z_j - \tau]_+ < [b_j + h - \tau]_+ \\
\implies \quad \phi(b_j - \tau) &\leq \phi(z_j - \tau) < \phi(b_j + h - \tau).
\end{align}

The strict inequality on the right holds because: if $z_j - \tau > 0$, then $z_j < b_j + h$ implies $z_j - \tau < b_j + h - \tau$, and $\phi$ is strictly increasing on $(0, \infty)$.

Summing over all $j = 1, \ldots, n$:
\begin{equation}\label{eq:sandwich}
\sum_{j=1}^n \phi(b_j - \tau) \leq \sum_{j=1}^n \phi(z_j - \tau) < \sum_{j=1}^n \phi(b_j + h - \tau).
\end{equation}

Subtracting 1 from each term:
\begin{equation}\label{eq:sandwich_f}
f_h(\tau) \leq f(\tau) < f_h(\tau - h), \quad \forall \tau \in [0, 1],
\end{equation}
where we used the fact that:
\begin{equation}
\sum_{j=1}^n \phi(b_j + h - \tau) = \sum_{j=1}^n \phi(b_j - (\tau - h)) = f_h(\tau - h) + 1.
\end{equation}

\paragraph{Step 2: Properties of $f$ and $f_h$.}
Both $f(\tau)$ and $f_h(\tau)$ are:
\begin{itemize}
    \item Continuous on $[0, 1]$
    \item Strictly decreasing on $[0, 1]$ (since $\phi$ is strictly increasing where positive)
    \item Satisfy $f(0) > 0$ and $f(1) < 0$ (similarly for $f_h$)
\end{itemize}

By the intermediate value theorem, there exist unique roots:
\begin{align}
\tau^\star &: \quad f(\tau^\star) = 0, \\
\tau_h &: \quad f_h(\tau_h) = 0.
\end{align}

\paragraph{Step 3: Bounding $\tau_h$ relative to $\tau^\star$.}
Evaluating the sandwich inequality in Eq.~\ref{eq:sandwich_f} at $\tau = \tau^\star$:
\begin{equation}\label{eq:sandwich_at_tau_star}
f_h(\tau^\star) \leq f(\tau^\star) < f_h(\tau^\star - h).
\end{equation}

Since $f(\tau^\star) = 0$, we have:
\begin{equation}\label{eq:bounds_on_fh}
f_h(\tau^\star) \leq 0 < f_h(\tau^\star - h).
\end{equation}

Because $f_h$ is strictly decreasing and continuous, and $f_h(\tau^\star - h) > 0$ while $f_h(\tau^\star) \leq 0$, there must exist a unique $\tau_h \in [\tau^\star - h, \tau^\star]$ such that $f_h(\tau_h) = 0$.

More precisely:
\begin{itemize}
    \item If $f_h(\tau^\star) < 0$, then $\tau_h \in (\tau^\star - h, \tau^\star)$ by the intermediate value theorem.
    \item If $f_h(\tau^\star) = 0$, then $\tau_h = \tau^\star$.
\end{itemize}

In both cases, we have:
\begin{equation}
\tau^\star - h < \tau_h \leq \tau^\star.
\end{equation}

\paragraph{Step 4: Error bound.}
The absolute error is:
\begin{equation}
|\tau^\star - \tau_h| = \tau^\star - \tau_h \leq h = \frac{1}{B}.
\end{equation}

This completes the proof.
\end{proof}

The proof shows that the histogram approximation is \emph{conservative} as it never overestimates $\tau^\star$, which means $\tau_h \leq \tau^\star$. This ensures that the approximate solution produces a sparsity pattern that preserves the support of the real solution---an useful property for safe block-masking.

\subsection{Proof of Proposition~\ref{prop:continuity_f}}
\label{sec:proofs_continuity}

We divided the proof of Proposition~\ref{prop:continuity_f} into two parts: (i) continuous differentiability of the threshold map $f$, and (ii) continuous differentiability of the \entmax transformation (from scores into probabilities). 
Throughout, we assume $\alpha > 1$,
and work with translated and scaled scores $\bm{z} = (\alpha-1)\bm{s} - (m - 1)\bm{1}$ where $m = (\alpha - 1)\max(\bm{s})$.
The domain of possible such scores after translation is $D = \{\bm{z} \in \mathbb{R}^n: \max(\bm{z}) = 1\}$. 
Scaling and centering are continuous smooth operations and thus composing with them does not affect the continuity of any order of differentiation.
We denote the active set induced by a threshold
$\tau$ as $\mathcal{S}_\tau \coloneqq \{i \in [n] : z_i > \tau\}$.  

\paragraph{Part (i): continuity of $f$.}
Given the active set $\mathcal{S}_\tau$, the derivatives of $f$ have the form:
\begin{align}
f(\tau) &= -1 + \sum_{i \in \mathcal{S}_\tau} \left(z_i -
\tau\right)^{\frac{1}{\alpha-1}} \\
f'(\tau) &= -\frac{1}{\alpha-1} \sum_{i \in \mathcal{S}_\tau} \left(z_i -
\tau\right)^{\frac{1}{\alpha-1}-1}
\label{eq:f_first_deriv}
\\
f''(\tau) &= \frac{1}{\alpha-1}
\left(\frac{1}{\alpha-1} - 1\right) \sum_{i \in \mathcal{S}_\tau} \left(z_i -
\tau\right)^{\frac{1}{\alpha-1}-2} \\
f^{(k)}(\tau) &= (-1)^k \left[\prod_{i=0}^{k-1}\left(\frac{1}{\alpha-1} - i\right)\right]
\sum_{i \in \mathcal{S}_\tau} \left(z_i - \tau\right)^{\frac{1}{\alpha-1}-k}
\label{eq:f_kth_deriv}
\end{align}
We can study the continuity of any such derivative by considering the
sum
\begin{equation}
s_k(\tau) \coloneqq \sum_{i \in \mathcal{S}_\tau}
\left(z_i - \tau\right)^{\frac{1}{\alpha-1}-k},
\end{equation}
because every term can be written as a continuous transformation of $s_k$.
For a small enough $\epsilon>0$, $S_{\tau+\epsilon} = \mathcal{S}_{\tau}$.
As a sum of continuous functions we therefore have
\begin{equation} 
\lim_{\epsilon \to 0_+} s_k(\tau+\epsilon) = s_k(\tau). 
\end{equation}
If $\tau \neq z_i$ for any $i \in [n]$ then $S_{\tau -
\epsilon}=\mathcal{S}_{\tau}$ also, so the only possible discontinuities are when
$\tau=z_i$ for some $i$. In this case, $z_i$ enters the support (and thus the sum) 
alongside any other tied $z_j=z_i$. Let's say the value $z_i$ appears $n_i
\geq 1$ times in the vector. We can write
\begin{align}
s_k(\tau-\epsilon)
&= \sum_{j \in S_{\tau-\epsilon}}
\left(z_j - \tau + \epsilon\right)^{\frac{1}{\alpha-1}-k} \\
&= \sum_{j \in \mathcal{S}_{\tau}}
\left(z_j - \tau + \epsilon\right)^{\frac{1}{\alpha-1}-k}
+ n_i
\left(z_i - \tau + \epsilon\right)^{\frac{1}{\alpha-1}-k}
\end{align}
Taking the limit to 0, the sum is a sum of continuous functions and so we have
\begin{align}
\lim_{\epsilon \to 0_+} s_k(\tau-\epsilon)
&=
s_k(\tau) +
 \lim_{\epsilon \to 0_+} n_i (z_i
-\tau+\epsilon)^{\frac{1}{\alpha-1}-k}\\
&= s_k(\tau) +
n_i \lim_{\epsilon \to 0_+} \epsilon
^{\frac{1}{\alpha-1}-k}
\end{align}
It follows 
that $s_k$ is continuous at $\tau$ iff.\
$(\alpha-1)^{-1} - k > 0$. In particular:
\begin{itemize}
\item $f(\tau$) is continuous when $1 < \alpha$.
\item $f'(\tau$) is continuous when $1 < \alpha < 2$.
\item $f''(\tau$) is continuous when $1 < \alpha < 3/2$, and so on.
\end{itemize}

\paragraph{Part (ii): continuity of $\alpha$-entmax.}

Let
\begin{equation}
F_\alpha(\bm{z}, \tau) = -1+\sum_{i\in\mathcal{S}_\tau}(z_i-\tau)^{\frac{1}{\alpha-1}}.
\end{equation}
Recall that, by continuity and monotonicity of $f$, for any $\bm{z}$ there is a unique $\tau \in [0, 1]$ such that $F_\alpha(\bm{z},\tau)=0$; denote by $g(\bm{z})$ the function mapping $\bm{z}$
to that corresponding unique root $\tau$.\footnote{At the root the active set is nonempty: if $\mathcal{S}_\tau=\varnothing$, then $F_\alpha(\bm{z},\tau)=-1\neq 0$.
Equivalently, since $\max(\bm{z})=1$, any root satisfies $\tau<1$ (because $F_\alpha(\bm{z},\tau)=-1$ for $\tau\ge 1$).
}
If $\alpha \in (1,2)$, from the continuity of $f$ shown above, we have that $F_\alpha$ is $\mathcal{C}^1$.
The condition that the jacobian be nonsingular in this case is equivalent to $f'(\tau) \neq 0$ (Eq.~\ref{eq:f_first_deriv}). The terms in the sum $s_k$ are all positive. For $\tau \in [0, 1]$ the sum contains at least one term corresponding to the maximum (scaled) score and thus $f'(\tau)<0$, confirming nonsingularity. 
The conditions of the implicit function theorem \citep[Thm.~1B.1]{dontchevimplicit} are therefore satisfied,
which implies the solution mapping $g$ is $\mathcal{C}^1$ for 
any $\bm{z} \in D$.
Since \entmax can be written coordinate-wise as
\begin{equation}
p_j(\bm{z}) = \left[z_j - g(\bm{z})\right]_+^{\frac{1}{\alpha-1}},
\end{equation}
its (almost everywhere) gradient with respect to $\bm{z}$ is
\begin{equation}
\nabla p_j(\bm{z})
=
\frac{1}{\alpha-1}
\left[z_j - g(\bm{z})\right]_+^{\frac{1}{\alpha-1} - 1}
\left(\bm{e}_j - \nabla g(\bm{z}) \right),
\end{equation}
and $\nabla p_j(\bm{z})=\bm{0}$ whenever $z_j<g(\bm{z})$.
For $1 <\alpha<2$ this function is a composition/product of continuous functions, and thus it is continuous \citep[Theorems 4.7 \& 4.9]{rudin1976principles}. Therefore, for this range of $\alpha$, the ``almost everywhere'' above becomes ``everywhere''.

Furthermore, by the higher-order extension of the implicit function theorem \citep[Prop.~1B.5]{dontchevimplicit},
we can repeat this argument for higher-order derivatives.
When $\alpha < \frac{k+1}{k}$, from the result in the first part we have $F_\alpha \in \mathcal{C}^k$
This implies $g$ is also $\mathcal{C}^k$.

Applying the same argument coordinate-wise, it follows that $\entmax$ is $\mathcal{C}^k$ 
whenever $1<\alpha<\frac{k+1}{k}$.
Indeed, as $\alpha \to 1_+$ we recover softmax, which is $\mathcal{C}^\infty$.

\section{\methodname Implementation Details}
\label{sec:implementation_details}

This appendix provides a detailed exposition of \methodname's implementation, including the bitpacking schemes that enable efficient histogram construction and block mask traversal. We begin by establishing notation (\S\ref{sec:bit_notation}), then describe the histogram construction and capacity analysis (\S\ref{sec:histogram_impl}), block mask encoding (\S\ref{sec:mask_impl}), and overflow handling strategies (\S\ref{sec:overflow_handling}).

\subsection{Notation and Problem Setup}
\label{sec:bit_notation}

\paragraph{Sequence and Tiling Parameters.}
We work with input sequences of length $n$, head dimension $d$, batch size $B$, and $N_H$ attention heads. The sequences are partitioned into tiles: query tiles of size $B_r$ and key/value tiles of size $B_c$, yielding $T_r = \lceil n/B_r \rceil$ query tiles and $T_c = \lceil n/B_c \rceil$ key/value tiles. Throughout, we use indices $i \in [T_r]$ for query tiles and $j \in [T_c]$ for key tiles.

\paragraph{Histogram Parameters.}
The histogram has shape $\mathcal{H} \in \mathbb{N}^{B_r \times B}$. To avoid atomics or reductions, we first accumulate the counts in local histograms $\mathcal{H}^{\text{local}}$ private to each position, providing $B_r \times B_c$ parallel accumulators. Our local histograms use $B$ bins encoded in a $w$-bit unsigned integer (e.g., $w=64$ for \texttt{uint64}), allocating $b = w/B$ bits per bin. The bin width (resolution) is $h = 1/B$.

\paragraph{Bitwise Operations.}
We use standard notation for bit manipulation:
\begin{itemize}[itemsep=1pt]
    \item $x \ll k$: Left shift by $k$ bits (multiply by $2^k$)
    \item $x \gg k$: Right shift by $k$ bits (divide by $2^k$, rounded down)
    \item $x \land y$: Bitwise AND
    \item $x \lor y$: Bitwise OR
    \item $\texttt{popc}(x)$: Population count (number of 1-bits)
    \item $\texttt{fns}(x, k)$: Find first set bit at position $\geq k$
\end{itemize}

\subsection{Histogram Construction and Bitpacking}
\label{sec:histogram_impl}

The histogram construction in Algorithm~\ref{alg:adasplash_v2_forward} (Phase 2) is the computational core of \methodname's threshold approximation. We now describe its implementation in detail.

\paragraph{Bitpacked Encoding Scheme.}
Each local histogram accumulator $\mathcal{H}^{\text{local}}_{ik} \in \{0, 1, \ldots, 2^w - 1\}$ encodes $B$ bin counts by partitioning its $w$ bits into $B$ equal segments of $b = w/B$ bits each. The value $\mathcal{H}^{\text{local}}_{ij}$ can be decomposed as:
\begin{equation}
\mathcal{H}^{\text{local}}_{ij} \;=\; \sum_{t=0}^{B-1} c_{k}\, 2^{k b},
\quad \text{where } c_{k} \in [0, 2^b - 1] \text{ is the local counts for bin } t.
\end{equation}

Figure~\ref{fig:bitpacked_structure} illustrates this structure for $B=8$ bins with $b=8$ bits per bin in a \texttt{uint64}.

\begin{figure}[t]
\centering
\fboxsep=12pt %
\fboxrule=0.5pt %
\fbox{%
\begin{tikzpicture}[
    scale=0.8,
    every node/.style={font=\small},
    bin/.style={rectangle, draw=black!60, fill=#1, minimum width=1.2cm, minimum height=1cm, align=center},
]

\foreach \i/\color/\count in {
    0/blue!15/0, 
    1/green!15/4, 
    2/orange!15/1, 
    3/red!15/2, 
    4/purple!15/0, 
    5/cyan!15/0, 
    6/yellow!25/1, 
    7/pink!15/0
} {
    \node[bin=\color] at (\i*1.5, 0) (bin\i) {Bin \i\\\texttt{\count}};
}

\node[below=1.0cm of bin3.south, anchor=south, font=\ttfamily] (binary) {%
    \text{Bit representation:}
    0x%
    \textcolor{pink!80!black}{00}%
    \textcolor{yellow!70!black}{01}%
    \textcolor{cyan!80!black}{00}%
    \textcolor{purple!80}{00}%
    \textcolor{red!80}{02}%
    \textcolor{orange!80!black}{01}%
    \textcolor{green!80!black}{04}%
    \textcolor{blue!80}{00}%
};

\end{tikzpicture}%
}
\caption{Example of a bitpacked histogram with $B=8$ bins and $b=8$ bits per bin. Each colored segment represents a bin's count encoded in 8 bits of a \texttt{uint64} integer. 
}
\label{fig:bitpacked_structure}
\end{figure}

\paragraph{Accumulation.}
For each entry $Z_{ij}$ (see Eq.~\ref{eq:change_of_vars}), we compute the corresponding bin,
\begin{equation}
k_{ij} \;=\; \min\!\big(\max(\lfloor B\,Z_{ij}\rfloor,0),\,B-1\big).
\end{equation}
Each update increments the respective bit field inside the packed integer:
\begin{equation}
\mathcal{H}^{\text{local}}_{ij} \;\leftarrow\; \mathcal{H}^{\text{local}}_{ij} \;+\; \mathbf{1}[Z_{ij}\ge 0]\cdot \big(1 \ll (b\,k_{i,j})\big).
\end{equation}
In Triton, this update can be achieved with a shift plus an integer addition.

\paragraph{Extraction and Aggregation.}
After accumulation, we extract the count for bin $k\in\{0,\dots,B-1\}$ from the packed \texttt{uint64} word by shifting and masking.
Let $\mathcal{B}_b = 2^b - 1$ be the $b$-bit mask (i.e., $b$ consecutive 1-bits). Then, we can aggregate across the $B_c$ parallel accumulators (over $j$) to obtain per-query bin counts
\begin{equation}
\mathcal{H}_{ik} \;=\; \sum_j \big(\mathcal{H}^{\text{local}}_{ij} \gg (k\,b)\big)\;\land\;\mathcal{B}_b.
\end{equation}
In Triton, we vectorize the shift-and-mask over $b$ using a compile-time range.

\subsection{Histogram initialization} \label{sec:histogram_solver}

With the final histogram $\mathcal{H}$ obtained, we can solve the approximate problem (see Eq.~\ref{eq:f_h}) using a modified version of the sorting-based algorithms discussed in Section~\ref{subsec:entmax_computation}. The $\tau_0$ and bracket $[\tau_{\text{lo}}, \tau_{\text{hi}}]$ obtained from this phase will be used for posterior refinement. Here, we present the algorithms in detail.

\paragraph{Case $\alpha \in \{1.5, 2.0\}$.} As the counts are conveniently sorted in the histogram $\mathcal{H}$, we scan bins in descending order, $k\in[B-1, .., 0]$. We maintain prefix sums over the active (ordered) set $\mathcal{P} = [B-1, B-2, \ldots, k]$:
\begin{align}
S_0 = \sum_{k \in \mathcal{P}} \mathcal{H}_k, \quad
S_1 = \sum_{k \in \mathcal{P}} \mathcal{H}_k \cdot \frac{k}{B}, \quad
S_2 = \sum_{k \in \mathcal{P}} \mathcal{H}_k \cdot \left(\frac{k}{B}\right)^2.
\end{align}

Intuitively, we start with an empty set where $f_h(\tau) = -1$. Iteratively, we add contributions from the next bin ($k=B-1$, then $k=B-2$, ...) while evaluating $f(\tau=k/B)$. If adding the contribution of a bin, $k$, cause $f(\tau)$ to be positive, it means that the root is between the values $k/B< \tau < (k+1)/B$. When that set $\mathcal{P}_k^\star$ is found, we can drop the ReLU, $[\cdot]_+$, and solve for $\tau$. Example for $\alpha=2.0$:
\begin{align*}
    &\sum_{k \in \mathcal{P}_k^\star} \mathcal{H}_k \left(\frac{k}{B} - \tau\right) = 1 \\
    &\underbrace{\sum_{k \in \mathcal{P}_k^\star} \mathcal{H}_k \frac{k}{B}}_{S_1}  - \,\tau \underbrace{\sum_{k \in \mathcal{P}_k^\star} \mathcal{H}_k}_{S_0} = 1 \implies \tau = \frac{S_1 - 1}{S_0}
\end{align*}

\paragraph{General $\alpha$.} For a general $\alpha$, we can find $\tau^\star$ by using any method originally developed for \entmax. However, instead of iterating over the actual scores, we can cheaply iterate over the  histogram $\mathcal{H}$.

\subsection{Block Mask Encoding and Traversal}
\label{sec:mask_impl}

After computing the threshold $\tau$ in Phase 3 of Algorithm~\ref{alg:adasplash_v2_forward}, we construct a binary mask $\mathcal{M} \in \{0, 1\}^{T_r \times T_c}$ indicating which tile pairs $(i,j)$ contain non-zero attention weights. Naively storing this as a boolean tensor would require $T_r \times T_c$ bytes. Instead, we use bitpacking encoded in \texttt{int32} to reduce memory by a factor of $32\times$.
That is,
we pack 32 consecutive column indices into a single \texttt{int32}, yielding a compact representation $\mathcal{M}^{\text{packed}} \in \mathbb{Z}^{T_r \times \lceil T_c / 32 \rceil}$ defined by:
\begin{equation}
\mathcal{M}^{\text{packed}}_{i, \lfloor j/32 \rfloor} = \sum_{k=0}^{31} \mathcal{M}_{i, 32 \lfloor j/32 \rfloor + k} \cdot 2^k.
\end{equation}
As a result, each bit in $\mathcal{M}^{\text{packed}}_{i,m}$ indicates whether the corresponding tile pair has non-zero weights.

\paragraph{Efficient Traversal with GPU Instructions.}
To iterate through active tiles, we use two GPU-native instructions:
\begin{enumerate}
    \item $\texttt{popc}(\mathcal{M}^{\text{packed}}_{i,m})$: Returns the number of set bits (active tiles) in the \texttt{int32}.
    \item $\texttt{fns}(\mathcal{M}^{\text{packed}}_{i,m}, k)$: Find the n-th set bit given by offset $k$.
\end{enumerate}

The traversal procedure processes only the $|\mathcal{M}|$ non-zero tiles rather than all $T_r \times T_c$ potential tiles. 
Concretely, for each packed word index $m$, we iterate only over set bits (active key blocks) using \texttt{popc} to count and \texttt{fns} to locate bits.
If \texttt{bit\_pos} is the position of a set bit within word $m$, the corresponding key-block index is
\begin{equation}
j \;=\; 32m + \texttt{bit\_pos}.
\end{equation}

The operations operations are implemented via Triton's inline PTX assembly, making the traversal cost negligible compared to the memory and compute operations.
This traversal mechanism is used in Phase 4 of Algorithm~\ref{alg:adasplash_v2_forward} to load only the non-zero key and value tiles, and similarly in the backward pass to skip zero blocks.

\subsection{Histogram Capacity and Overflow Handling}
\label{sec:overflow_handling}

The bitpacking scheme naturally imposes capacity limits for our histogram since each bin can count up to $2^b - 1$ items before overflow. With $B_c$ parallel accumulators per query tile, the maximum capacity is:
\begin{equation}
C_{\text{max}} = B_c \times (2^{w/B} - 1).
\end{equation}

Table~\ref{tab:histogram_capacity_detailed} summarizes capacity and resolution trade-offs for various configurations.
The table reveals a fundamental trade-off: increasing $B$ improves threshold resolution (tighter bound in Proposition~\ref{prop:histogram-bound}) but reduces capacity per bin. For sequences up to 16K tokens per query, \texttt{uint64} with $B=8$ provides excellent resolution ($h=0.125$) while avoiding overflow. For longer sequences, we turn to an overflow mitigation strategy, which we describe next.\footnote{We note that \texttt{uint128} is already available in CUDA: \url{https://developer.nvidia.com/blog/implementing-high-precision-decimal-arithmetic-with-cuda-int128/}}

\begin{table}[t]
\centering
\small
\caption{Histogram capacity analysis for different integer types and bin configurations with $B_c = 64$. Capacity indicates maximum keys per query tile; resolution $h = 1/B$ is the bin width affecting threshold accuracy (Proposition~\ref{prop:histogram-bound}).}
\label{tab:histogram_capacity_detailed}
\begin{tabular}{lccccc}
\toprule
Type ($w$) & Bins ($B$) & Bits/Bin ($b$) & Max/Bin & Capacity & Resolution ($h$) \\
\midrule
\multicolumn{6}{l}{\textit{uint64}} \\
64 & 4 & 16 & 65,535 & 4,194,240 & 0.25 \\
64 & 8 & 8 & 255 & 16,320 & 0.125 \\
64 & 16 & 4 & 15 & 960 & 0.0625 \\
\midrule
\multicolumn{6}{l}{\textit{uint128}} \\
128 & 8 & 16 & 65,535 & 4,194,240 & 0.125 \\
128 & 16 & 8 & 255 & 16,320 & 0.0625 \\
128 & 32 & 4 & 15 & 960 & 0.03125 \\
\bottomrule
\end{tabular}
\end{table}

\paragraph{Periodic Flush.}

After processing $C_{\text{max}}$ keys, we extract all bin counts, accumulate them into shared memory array of size $(T_r \times B)$, and reset the SRAM histogram to zero. 
Since each query processes $T_c$ key tiles and the histogram saturates after $2^b - 1$ tiles, the number of flushes required for a sequence of length $n$ is:
\begin{equation}
N_{\text{flush}} = \left\lceil \frac{n}{B_c \times (2^b - 1)} \right\rceil  =  \left\lceil \frac{T_c}{2^b - 1} \right\rceil.
\end{equation}

\subsection{Backward Pass Algorithms}
\label{sec:backward_pass}

In the backward pass, we exploit the block mask $\mathcal{M}$ to skip zero gradients, achieving asymptotic speedups when sparsity is high.
The key difference from standard attention backpropagation is the use of the block mask $\mathcal{M}$ (and its transpose $\mathcal{M}^\top$) to iterate only over non-zero blocks. For $|\mathcal{M}| \ll T_r \cdot T_c$ (high sparsity), this provides order-of-magnitude speedups in the backward pass, offsetting the higher forward pass cost of \methodname relative to FlashAttention-2.

\section{Language Modeling} \label{sec:lm-app}

We train our language models with the \textit{torchtitan} library~\citep{liang2025torchtitan} and in a single node with 4x H100 NVIDIA GPUs. The models are trained for 50B tokens of DCLM-Edu data with the WSD scheduler (2000 warmup steps, stable until 80\% of training and decay for the latter 10B tokens). Weight decay is set to 0.1 and  maximum learning rate is set to $3\times10^{-4}$ and $2 \times 10^{-4}$ for 350M and 1B parameter models, respectively. Final learning rate is set to 10\% of the maximum. Context length is set to 4096 tokens with an effective batch size of 524k tokens, resulting in 100k total steps. We decide to go beyond the Chinchilla \citep{hoffmann2022an} optimal token counts since we saw steady increased model performance even with the additional tokens. 
Finally, for inference we use a simplified variant of \methodname that operates on fully materialized logits $\bm{z}\in\mathbb{R}^n$. Despite its simplicity, in practice we found this approach to be faster than parallelizing over the K/V sequence dimension for the sequence lengths considered in this work.

\paragraph{Context-length extension.} For the long-context extension phase we swap the data mixture at the decay phase (10B tokens) with the ProLong dataset~\cite{gao-etal-2025-train}, keeping total training data at 50B tokens. Here, we increase the sequence length to 32k and keep the effective batch size fixed. For RoPE-based models, we increase their $\theta$ from the 50k default to 800k, consistent with the ProLong methodology.

\paragraph{Evaluation.} For long-context we adopt both the full RULER benchmark~\citep{hsieh2024ruler} and the In-Context Learning subset of HELMET \citep{yen2025helmet} which is a capability not covered by RULER.
For evaluation in short-context tasks, we also include Lambada \citep{lambada, radford2019language} for evaluating perplexity, and drop BoolQ \citep{clark2019boolqexploringsurprisingdifficulty} from the evaluation mix following Olmo3 \citep{olmo2025olmo}. 
For completeness, we provide the results of all of our models on short-context benchmarks in Table~\ref{tab:app_short_ctx_lm_results}. 
The results on RULER for the long-context adapted models are presented in Table~\ref{tab:1b_ctx_ext_results}, while the HELMET-ICL results can be seen in Table~\ref{tab:HELMET_results}.

\paragraph{Discussion.}
Overall, we see that extended models show a small degradation on short-context benchmarks, which is typical of long-context extension~\citep{gao-etal-2025-train}.
However, the scenario flips when we look at Table~\ref{tab:1b_ctx_ext_results}, where we observe that our long-context procedure was effective and 32K-context models are able to complete tasks at 32K context length to an effective degree. Analysis of In-Context Learning capabilities (Table ~\ref{tab:HELMET_results}) between dense and sparse models further confirm the strengths of \methodname as it outperforms softmax models across all evaluated sequence lengths.

\begin{table*}[t]
  \caption{
  Downstream evaluations on short-context benchmarks. 
  Models are trained with 50B tokens from DCLM-Edu and have 4K context length. Those marked with \textit{(32K)} undergo context-length extension with 10B tokens of Prolong data. 
  Best results are in \textbf{bold}.
  }
  \label{tab:app_short_ctx_lm_results}
  \centering
  \small
  \setlength{\tabcolsep}{3.5pt}
  \begin{tabular}{lccccccccccccc}
    \toprule
    \bf Model  & \bf LMB (ppl) & \bf LMB & \bf ARC-E & \bf ARC-C & \bf CSQA & \bf HS &  \bf OBQA & \bf PIQA & \bf SocialQA & \bf WG & \bf Avg. \\
    \midrule
     \textcolor{gray}{\textit{350M params.}}\\
    Softmax with RoPE  & 
    23.93 & 40.6 & 61.9 & 32.1 & 50.5 & 41.6 & 38.4 & 66.6 & 45.1 & 48.9 & 47.3 \\
    Entmax with RoPE  &
    22.36 & 39.5 & 63.0 & 34.8 & 50.5 & 40.2 & 39.0 & 64.8 & 44.1 & 53.4 & 47.7 \\
    Softmax with NAPE  & 
    19.23 & 41.2 & 61.8 & 33.4 & 47.6 & 40.5 & 38.0 & 64.8 & 43.5 & 52.7 & 47.1 \\
    Entmax with NAPE  & 
    18.62 & 42.4 & 61.9 & 33.0 & 51.2 & 41.0 & 39.2 & 66.0 & 44.4 & 53.7 & \bf 48.1 \\

    \cdashlinelr{1-13}
    \textcolor{gray}{\textit{1B params.}}\\
    
    Softmax with RoPE  &
    15.01 & 44.7 & 69.0 & 36.0 & 58.7 & 49.6 & 42.8 & 70.0 & 46.4 & 55.6 & 52.5 \\
    Entmax with RoPE  &
    15.76 & 43.9 & 65.4 & 36.6 & 56.2 & 47.4 & 41.6 & 69.8 & 45.7 & 54.2 & 51.2 \\
    Softmax with NAPE  &
    11.97 & 48.0 & 69.3 & 37.4 & 56.7 & 48.8 & 45.2 & 69.5 & 46.5 & 55.2 & 53.0 \\
    Entmax with NAPE  &
    11.42 & 49.2 & 67.7 & 39.9 & 57.3 & 48.7 & 45.0 & 68.3 & 47.1 & 55.1 & \bf 53.1 \\

    \cmidrule(lr){1-13}
    \textcolor{gray}{\textit{1B params.}} \\
    
    Softmax with RoPE (32K) &
    14.95 & 46.8 & 61.9 & 34.0 & 50.0 & 44.4 & 42.0 & 68.0 & 45.1 & 53.2 & 49.5 \\
    Entmax with RoPE (32K) &
    12.82 & 48.2 & 63.7 & 34.1 & 50.2 & 45.0 & 40.4 & 67.1 & 45.8 & 52.5 & 49.7 \\
    Softmax with NAPE (32K) &
    12.04 & 50.7 & 62.4 & 33.6 & 49.8 & 45.4 & 41.0 & 68.2 & 46.5 & 51.9 & 49.9 \\
    Entmax with NAPE (32K) &
    13.01 & 49.0 & 64.1 & 34.2 & 49.2 & 45.8 & 41.8 & 67.6 & 46.9 & 52.9 & \textbf{50.2} \\

    \bottomrule
  \end{tabular}
\end{table*}

\subsection{Dense to Sparse Attention Conversion}
\label{subsec:app_sparse_attention_conversion}

Having established the efficacy and efficiency of sparse attention, we make a preliminary study of how to convert a pretrained softmax model into a \entmax one.
We do so via continued pretraining: starting from a 1B-parameter checkpoint trained with softmax + NAPE at 4K context length, we replace softmax with \entmax (implemented with \methodname) and continue training for 10B tokens on Nemotron-CC-V2~\citep{nvidia2025nvidianemotronnano2}, keeping the final learning rate unchanged. 
We focus only on 1B NAPE model variants due to the seemingly sub-optimality of RoPE for \entmax attention. 
Table~\ref{tab:app_conversion_short_ctx_lm_results} shows the full results.
Directly replacing softmax with \entmax in the attention modules without any additional training results in a performance degradation but not a total collapse (see ``Entmax (converted)'' entry). 
After only 2B tokens, at 52B tokens in total, the converted checkpoint already outperforms the baseline softmax scores on average. 
Finally, we continue training the baselines to measure the total performance lost due to the conversion, evaluating after successive increases of 1B tokens. 
We observe that the finals results are roughly on par with the results obtained by softmax and \entmax models trained from scratch on the full 60B tokens.
Overall, this strategy offers insight into a efficient recipe for building entmax-based models from existing softmax + NAPE checkpoints, which is particularly appealing for midtraining or long-context extension phases. Applying this methodology to other positional encodings or architectures could, additionally, require other adaptation phases \citep{gelberg2026drope}.

\begin{table*}[t]
  \caption{
  Downstream evaluations on short-context benchmarks 
  Models are trained with 50B tokens from DCLM-Edu and have 4K context length. \textit{Entmax (converted)} implies the softmax-based model was evaluated with \methodname without further training, while \textit{Entmax (CPT)} is the converted model subjected to Continuous Pre-Training (CPT).
  }
  \label{tab:app_conversion_short_ctx_lm_results}
  \centering
  \footnotesize
  \setlength{\tabcolsep}{4pt}
  \begin{tabular}{lccccccccccccc}
    \toprule
    \bf Model (\textit{1B params.}) & \bf  Tokens & \bf LMB & \bf ARC-E & \bf ARC-C & \bf CSQA & \bf HS &  \bf OBQA & \bf PIQA & \bf SocialQA & \bf WG & \bf Avg.  \\
    \midrule

    \multicolumn{10}{l}{\textcolor{gray}{\textit{50B token baselines}}} \\
    Softmax (scratch) & 50B & 48.0 & 69.3 & 37.4 & 56.7 & 48.8 & 45.2 & 69.5 & 46.5 & 55.2 & 53.0 \\
    Entmax (scratch)  & 50B & 49.2 & 67.7 & 39.9 & 57.3 & 48.7 & 45.0 & 68.3 & 47.1 & 55.1 & 53.1 \\
    \midrule
    \multicolumn{10}{l}{\textcolor{gray}{\textit{softmax $\rightarrow \entmax[1.5]$}}} \\
    Entmax (converted)  & 50B & 31.3 & 42.3 & 31.8 & 47.3 & 36.6 & 36.0 & 63.0 & 41.0 & 50.8 & 42.2 \\
    Entmax (CPT)        & 51B & 46.6 & 69.2 & 37.5 & 56.8 & 48.3 & 47.2 & 69.7 & 47.4 & 53.5 & 52.9 \\
    Entmax (CPT)        & 52B & 46.9 & 68.8 & 37.7 & 57.0 & 49.2 & 45.8 & 69.8 & 46.7 & 55.2 & 53.0 \\
    Entmax (CPT)        & 53B & 46.3 & 69.7 & 37.6 & 57.6 & 48.8 & 45.4 & 70.2 & 46.6 & 55.2 & 53.0 \\
    Entmax (CPT)        & 54B & 46.4 & 69.7 & 38.8 & 56.7 & 50.6 & 46.0 & 70.3 & 46.9 & 54.7 & 53.3 \\
    Entmax (CPT)        & 55B & 47.5 & 69.6 & 40.0 & 57.6 & 50.6 & 45.8 & 69.9 & 46.8 & 55.6 & 53.7 \\
    Entmax (CPT)        & 56B & 48.1 & 68.9 & 39.7 & 56.8 & 50.7 & 46.2 & 69.8 & 47.9 & 55.2 & 53.7 \\
    Entmax (CPT)        & 57B & 47.3 & 69.0 & 39.3 & 57.4 & 50.3 & 46.8 & 70.0 & 46.6 & 55.4 & 53.6 \\
    Entmax (CPT)        & 58B & 47.6 & 69.1 & 39.6 & 57.8 & 49.8 & 46.4 & 70.6 & 46.9 & 55.1 & 53.7 \\
    Entmax (CPT)        & 59B & 48.0 & 69.7 & 38.9 & 58.3 & 50.3 & 47.6 & 69.6 & 46.5 & 54.1 & 53.7 \\
    Entmax (CPT)        & 60B & 47.4 & 68.8 & 39.7 & 57.7 & 50.6 & 45.4 & 69.6 & 47.1 & 55.0 & 53.5 \\
    \midrule
    \multicolumn{10}{l}{\textcolor{gray}{\textit{60B token baselines}}} \\
    Softmax (scratch)   & 60B & 48.5 & 68.9 & 37.1 & 58.4 & 51.2 & 46.6 & 70.0 & 47.4 & 55.2 & 53.7 \\
    Entmax (scratch)    & 60B & 47.8 & 69.5 & 41.6 & 58.8 & 50.7 & 47.2 & 68.7 & 48.7 & 57.0 & 54.4 \\
    \bottomrule
  \end{tabular}
\end{table*}

\section{Attention Sparsity and Efficiency} 
\label{subsec:app_heatmaps}

\paragraph{Sparse attention patterns.}

In Figure~\ref{fig:heatmaps_sparsity}, we analyze emerging attention block sparsity in the 1B Entmax (NAPE) model trained up to 32K context length.
We compute the average $64 \times 64$ attention block sparsity ratios across 64 sequences sampled from the ProLong dataset, evaluated at multiple context lengths (4K, 8K, 16K, 32K, 64K and 128K).
Across all context lengths, the sparsity pattern is strongly head-dependent. The first half of the heads (0-11; ALiBi) are consistently sparse across layers. In contrast, the second half (12-23; NoPE) is comparatively dense at 4K, but develops \emph{structured} sparsity as context grows, most notably in mid-to-late layers. 
The overall mean sparsity increases monotonically with context length, indicating that longer contexts are handled with less dense attention on average.

\paragraph{Context scaling experiment details.} 
We study the runtime behavior of \methodname as context length increases using attention block sparsity patterns extracted from the 1B Entmax + NAPE model with 32k context length, reported in Table~\ref{tab:1b_ctx_ext_results}.
To this end, we select a fixed transformer layer that exhibits high block sparsity and record its average block sparsity as a function of context length. For each context length, we benchmark \methodname at the corresponding sparsity level, run forward and backward passes, and report average runtimes over repeated runs.
Figure~\ref{fig:runtimes_comparison} reports the resulting average forward and backward runtimes as a function of context length.

\begin{figure}[t]
    \centering
    \includegraphics[width=\linewidth]{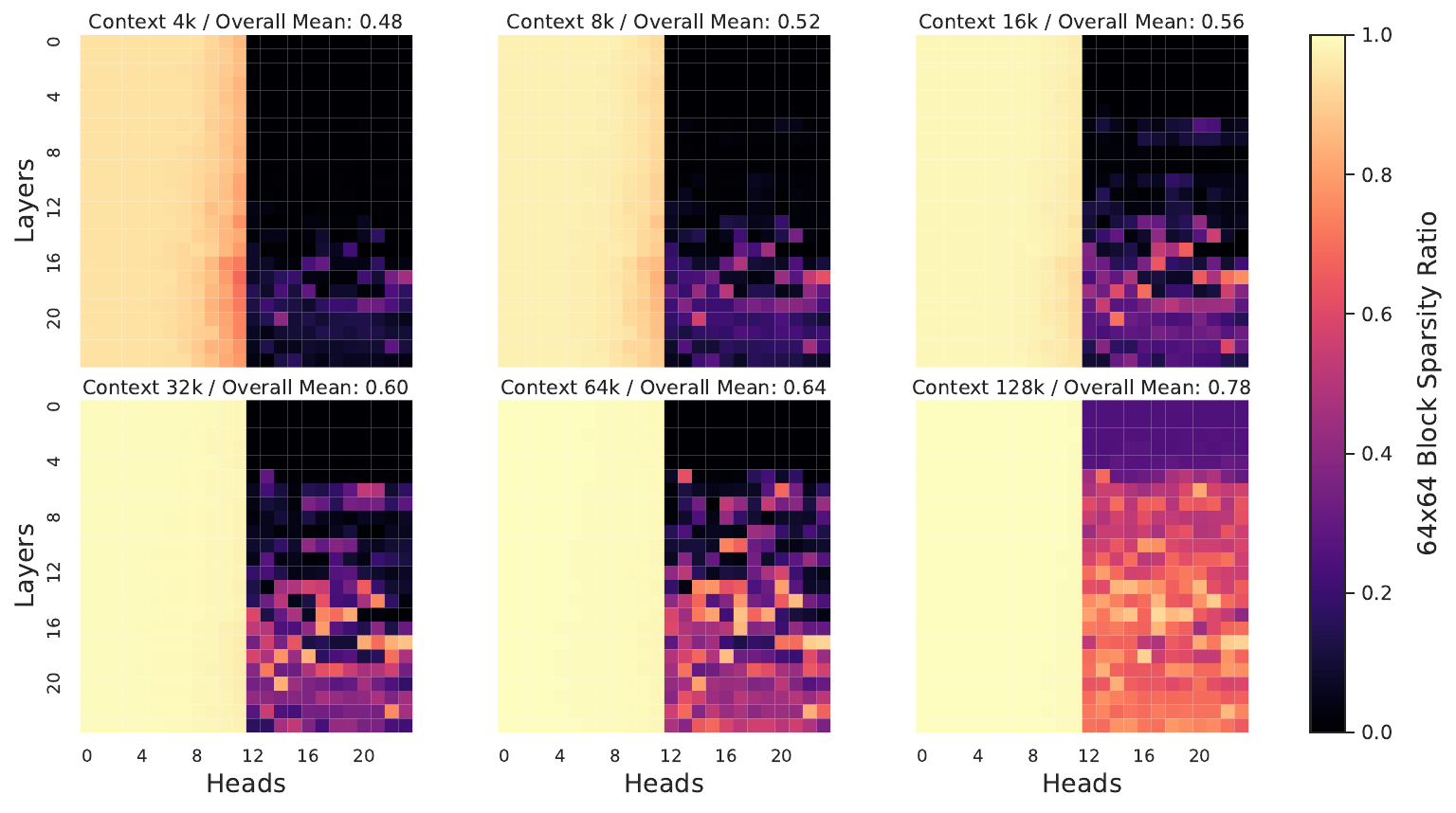}
    \caption{Average $64 \times 64$ attention block sparsity ratio for the Entmax (NAPE) model with 32K context length.
    Panels correspond to evaluated context lengths (4K / 8K / 16K / 32K / 64K / 128K). We report the overall average sparsity across all layers and heads in the title on each plot. 
    }
    \label{fig:heatmaps_sparsity}
\end{figure}

\section{Full Algorithms}
\label{sec:full_algorithms}

We provide the full forward pass pseudo-code of \methodname in Algorithm~\ref{alg:adasplash_v2_forward}, and pseudo-code for our two backward kernels in Algorithm~\ref{alg:bwd_dkdv} and \ref{alg:bwd_dq}.

\begin{algorithm}[t]
\caption{\methodname Forward Pass}
\label{alg:adasplash_v2_forward}
\begin{algorithmic}[1]
\REQUIRE Matrices $\bm{Q}, \bm{K}, \bm{V} \in \mathbb{R}^{n \times d}$ in HBM, parameter $\alpha>1$, bin count $B$, block sizes $(B_r,B_c)$.
\ENSURE Output $\bm{O}\in\mathbb{R}^{n\times d}$ in HBM, threshold vector $\bm{\tau}\in\mathbb{R}^{n}$ in HBM, block mask $\bm{\mathcal{M}}\in\mathbb{Z}^{\lceil n/B_r\rceil\times \lceil n/B_c\rceil}$ in HBM.

\STATE Let $T_r=\lceil n/B_r\rceil$ and $T_c=\lceil n/B_c\rceil$.
\STATE Initialize $\bm{O}\gets \bm{0}\in\mathbb{R}^{n\times d}$, $\bm{\tau}\gets \bm{0}\in\mathbb{R}^{n}$, and $\bm{\mathcal{M}}\gets \bm{0}\in\mathbb{Z}^{T_r\times T_c}$ in HBM.
\STATE Divide $\bm{Q}, \bm{O}$ into $T_r$ blocks $\bm{Q}_i,\bm{O}_{i}\in\mathbb{R}^{B_r\times d}$, and divide $\bm{K},\bm{V}$ into $T_c$ blocks $\bm{K}_j,\bm{V}_{j}\in\mathbb{R}^{B_c\times d}$. Divide $\bm{\tau}$ into $T_r$ blocks $\bm{\tau}_i \in \mathbb{R}^{B_r}$.

\FOR{$i = 1$ to $T_r$}
    \STATE Load $\bm{Q}_i$ from HBM to on-chip SRAM.

    \STATE \textbf{// Phase 1: Compute row-wise maximum}
    \STATE Initialize $\bm{m}_i \gets -\infty \in \mathbb{R}^{B_r}$ on SRAM.
    \FOR{$j = 1$ to $T_c$}
        \STATE Load $\bm{K}_j$ from HBM to SRAM.
        \STATE Compute $\bm{S}_i^{(j)} \gets \bm{Q}_i \bm{K}_j^\top \in \mathbb{R}^{B_r \times B_c}$.
        \STATE Update $\bm{m}_i \gets \max\!\bigl(\bm{m}_i,\ \max_{\text{col-wise}}(\bm{S}_i^{(j)})\bigr)$.
    \ENDFOR
    \STATE
    \STATE \textbf{// Phase 2: Build histogram in a single pass}
    \STATE Initialize local histogram $\bm{\mathcal{H}}_{ij}^{\text{local}} \gets \bm{0}\in\mathbb{R}^{B_r}$ on SRAM.
    \FOR{$j = 1$ to $T_c$}
        \STATE Load $\bm{K}_j$ from HBM to SRAM.
        \STATE Compute $\bm{S}_i^{(j)} \gets \bm{Q}_i \bm{K}_j^\top \in \mathbb{R}^{B_r \times B_c}$.
        \STATE Update $\bm{\mathcal{H}}_{ij}^{\text{local}}$ with quantized scores from $\bm{Z}_i^{(j)}$.
    \ENDFOR
    \STATE Reduce $\bm{\mathcal{H}}_{ij}^{\text{local}}$ into $\bm{\mathcal{H}}_i \in \mathbb{R}^{B_r \times B}$.
    \STATE $\bm{\tau}_{i} \gets$ SolveHistogram$(\bm{\mathcal{H}}_i,\alpha)\in\mathbb{R}^{B_r}$ \hfill $\triangleright$ See \S\ref{sec:histogram_solver}
    \STATE
    \STATE \textbf{// Phase 3: Single hybrid step and save block mask}
    \STATE Initialize accumulators for $f(\bm{\tau}_{0,i}), f'(\bm{\tau}_{0,i}), f''(\bm{\tau}_{0,i})$ (row-wise).
    \FOR{$j = 1$ to $T_c$}
        \STATE Load $\bm{K}_j$ from HBM to SRAM.
        \STATE Compute $\bm{S}_i^{(j)} \gets \bm{Q}_i \bm{K}_j^\top \in \mathbb{R}^{B_r \times B_c}$.
        \STATE Compute $\bm{P}_i{}^{(j)} \gets \max\!\bigl(0,\ (\alpha\!-\!1)\bm{S}_i^{(j)}-\bm{\tau}_{0,i}\bigr)^{\nicefrac{1}{\alpha-1}} \in \mathbb{R}^{B_r\times B_c}$.
        \STATE Accumulate $f(\bm{\tau}_{i}), f'(\bm{\tau}_{i}), f''(\bm{\tau}_{i})$ using $\bm{P}_i{}^{(j)}$.
        \IF{$\mathrm{any}(\bm{P}_i{}^{(j)} > 0)$}
            \STATE $\bm{\mathcal{M}}_{ij} \gets 1$.
        \ENDIF
    \ENDFOR
    \STATE $\bm{\tau}_i \gets$ HybridSolver$\!\left(f,f',f''\right)\in\mathbb{R}^{B_r}$ \hfill $\triangleright$ See \S\ref{subsec:theoretical_properties}
    \STATE
    \STATE \textbf{// Phase 4: Compute output for nonzero blocks}
    \STATE Initialize $\bm{O}_i \gets \bm{0}_{B_r\times d}$ on SRAM.
    \FOR{$j : \bm{\mathcal{M}}_{ij} = 1$}
        \STATE Load $\bm{K}_j, \bm{V}_j$ from HBM to SRAM.
        \STATE Compute $\bm{S}_i^{(j)} \gets \bm{Q}_i \bm{K}_j^\top \in \mathbb{R}^{B_r \times B_c}$.
        \STATE Compute $\bm{P}_i^{(j)} \gets \max\!\bigl(0,\ (\alpha\!-\!1)\bm{S}_i^{(j)}-\bm{\tau}_i\bigr)^{\nicefrac{1}{\alpha-1}} \in \mathbb{R}^{B_r\times B_c}$.
        \STATE $\bm{O}_i \gets \bm{O}_i + \bm{P}_i^{(j)} \bm{V}_j \in \mathbb{R}^{B_r\times d}$.
    \ENDFOR

    \STATE Write $\bm{O}_i$ to HBM.
    \STATE Write $\bm{\tau}_i$ to HBM.
\ENDFOR
\STATE {\bfseries Return:} $\bm{O}$ (and saved $\bm{\tau}$, $\bm{\mathcal{M}}$).
\end{algorithmic}
\end{algorithm}

\begin{algorithm}[tb]
\caption{\methodname Backward Pass for $\bm{dK}$ and $\bm{dV}$}
\label{alg:bwd_dkdv}
\begin{algorithmic}[1]
\REQUIRE Matrices $\bm{Q}, \bm{K}, \bm{V}, \bm{O}, \bm{dO} \in \mathbb{R}^{n \times d}$ and binary mask $\bm{\mathcal{M}} \in \mathbb{Z}^{\lceil n/B_r \rceil \times \lceil n/B_c \rceil}$ in HBM, vector $\bm{\tau} \in \mathbb{R}^n$ in HBM, block sizes $B_c, B_r$, parameter $\alpha$. Assume previously computed $\delta_i = \sum_{j=1}^n S_{ij}^{2-\alpha} \bm{V}_j / \|\bm{U}_i\|_1$. \\
\STATE Divide $\bm{Q}$ into $T_r = \lceil n / B_r \rceil$ blocks $\bm{Q}_1, \dots, \bm{Q}_{T_r}$ of size $B_r \times d$ each, and divide $\bm{K}, \bm{V}$ into $T_c = \lceil n / B_c \rceil$ blocks $\bm{K}_1, \dots, \bm{K}_{T_c}$, $\bm{V}_1, \dots, \bm{V}_{T_c}$ of size $B_c \times d$ each.
\STATE Divide  $\bm{dO}$ into $T_r$ blocks $\bm{dO}_1, \dots, \bm{dO}_{T_r}$ of size $B_r \times d$ each.
\STATE Divide $\bm{\tau}$ into $T_r$ blocks $\bm{\tau}_1, \dots, \bm{\tau}_{T_r}$ of size $B_r$ each.
\STATE Initialize and divide $\bm{dK}, \bm{dV} \in \mathbb{R}^{n \times d}$ into $T_c$ blocks $\bm{dK}_1, \dots, \bm{dK}_{T_c}$ and $\bm{dV}_1, \dots, \bm{dV}_{T_c}$ of size $B_c \times d$ each.
\STATE Divide $\boldsymbol{\delta}$ into $T_r$ blocks $\boldsymbol{\delta}_1, \dots, \boldsymbol{\delta}_{T_r}$ of size $B_r$.
\FOR{$1 \leq j \leq T_c$}
    \STATE Load $\bm{K}_j, \bm{V}_j$ from HBM to on-chip SRAM.
    \STATE Initialize $\bm{dK}_j = \bm{0}_{B_c \times d}$ on SRAM.
     \STATE Initialize $\bm{dV}_j = \bm{0}_{B_c \times d}$ on SRAM.
    \FOR{$i : \bm{\mathcal{M}}_{ij} = 1$}
        \STATE Load $\bm{Q}_i, \bm{dO}_i, \boldsymbol{\tau}_i, \boldsymbol{\delta}_i$ from HBM to on-chip SRAM.
        \STATE On chip, compute $\bm{S}_i^{(j)} = \bm{Q}_i \bm{K}_j^\top \in \mathbb{R}^{B_r \times B_c}$.
        \STATE On chip, compute 
        $\bm{P}_i^{(j)} = \max(0, (\alpha-1)\bm{S}_i^{(j)}-\bm{\tau}_i)^{\nicefrac{1}{\alpha-1}} \in \mathbb{R}^{B_r \times B_c}$.
        \STATE On chip, compute $\bm{dV}_j \leftarrow \bm{dV}_j + (\bm{P}_i^{(j)})^\top \bm{dO}_i \in \mathbb{R}^{B_c \times d}$.
        \STATE On chip, compute $\bm{dP}_i = \bm{dO}_i \bm{V}_j^\top \in \mathbb{R}^{B_r \times B_c}$.
        \STATE On chip, compute $\bm{U}_i^{(j)} = {\bm{P}_i^{(j)}}^{2-\alpha} \in \mathbb{R}^{B_r \times B_c}$.
        \STATE On chip, compute $\bm{dS}_i^{(j)} = \bm{U}_i^{(j)} \odot (\bm{dP}_i^{(j)} - \boldsymbol{\delta}_i) \in \mathbb{R}^{B_r \times B_c}$.
        \STATE On chip, compute $\bm{dK}_j \leftarrow \bm{dK}_j + (\bm{dS}_i^{(j)})^\top \bm{Q}_i \in \mathbb{R}^{B_c \times d}$.
    \ENDFOR
    \STATE Write $\bm{dK}_j, \bm{dV}_j$ to HBM.
\ENDFOR
\STATE {\bfseries Return:} Gradients $\bm{dK}, \bm{dV}$.
\end{algorithmic}
\end{algorithm}

\begin{algorithm}[tb]
   \caption{\methodname Backward Pass for $\bm{dQ}$}
   \label{alg:bwd_dq}
\begin{algorithmic}[1]
   \REQUIRE Matrices $\bm{Q}, \bm{K}, \bm{V}, \bm{O}, \bm{dO} \in \mathbb{R}^{n \times d}$ and binary mask $\bm{\mathcal{M}} \in \mathbb{Z}^{\lceil n/B_r \rceil \times \lceil n/B_c \rceil}$ in HBM, vector $\bm{\tau} \in \mathbb{R}^n$ in HBM, block sizes $B_c, B_r$, parameter $\alpha$. Assume previously computed $\delta_i = \sum_{j=1}^n S_{ij}^{2-\alpha} \bm{V}_j / \|\bm{U}_i\|_1$. \\
   \STATE Divide $\bm{Q}$ into $T_r = \lceil n / B_r \rceil$ blocks $\bm{Q}_1, \dots, \bm{Q}_{T_r}$ of size $B_r \times d$ each, and divide $\bm{K}, \bm{V}$ into $T_c = \lceil n / B_c \rceil$ blocks $\bm{K}_1, \dots, \bm{K}_{T_c}$, $\bm{V}_1, \dots, \bm{V}_{T_c}$ of size $B_c \times d$ each.
    \STATE Divide $\bm{dO}$ into $T_r$ blocks $\bm{dO}_1, \dots, \bm{dO}_{T_r}$ of size $B_r \times d$ each. 
    \STATE Divide $\bm{\tau}$ into $T_r$ blocks $\bm{\tau}_1, \dots, \bm{\tau}_{T_r}$ of size $B_r$ each.
    \STATE Initialize $\bm{dQ}$ in HBM and divide it into $T_r$ blocks $\bm{dQ}_1, \dots, \bm{dQ}_{T_r}$ of size $B_r \times d$ each. 
    \STATE Divide $\boldsymbol{\delta}$ into $T_r$ blocks $\boldsymbol{\delta}_1, \dots, \boldsymbol{\delta}_{T_r}$ of size $B_r$ each.
   \FOR{$i = 1$ to $T_r$}
       \STATE Load $\bm{Q}_i, \bm{dO}_i, \boldsymbol{\delta}_i, \boldsymbol{\tau}_i$,  from HBM to on-chip SRAM
       \STATE Initialize $\bm{dQ}_i = \bm{0}_{B_c \times d}$ on SRAM.
       \FOR{$j : \bm{\mathcal{M}}_{ij} = 1$}
           \STATE On chip, compute $\bm{S}_i^{(j)} = \bm{Q}_i \bm{K}_j^\top \in \mathbb{R}^{B_r \times B_c}$.
            \STATE On chip, compute $\bm{P}_i^{(j)} = \max(0, (\alpha-1)\bm{S}_i^{(j)}-\bm{\tau}_i)^{\nicefrac{1}{\alpha-1}} \in \mathbb{R}^{B_r \times B_c}$.
            \STATE On chip, compute $\bm{dP}_i = \bm{dO}_i \bm{V}_j^\top \in \mathbb{R}^{B_r \times B_c}$.
            \STATE On chip, compute $\bm{U}_i^{(j)} = {\bm{P}_i^{(j)}}^{2-\alpha} \in \mathbb{R}^{B_r \times B_c}$.
            \STATE On chip, compute $\bm{dS}_i^{(j)} = \bm{U}_i^{(j)} \odot (\bm{dP}_i^{(j)} - \boldsymbol{\delta}_i) \in \mathbb{R}^{B_r \times B_c}$.
            \STATE On chip, compute $\bm{dQ}_i \leftarrow \bm{dQ}_i + \bm{dS}_i^{(j)} \bm{K}_j \in \mathbb{R}^{B_r \times d}$.
       \ENDFOR
       \STATE Write $\bm{dQ}_i$ to HBM
   \ENDFOR
   \STATE {\bfseries Return:} Gradient $\bm{dQ}$
\end{algorithmic}
\end{algorithm}

\end{document}